\theoremstyle{plain}
\newtheorem*{theorem*}{Theorem}
\newcommand{\ie}{\emph{i.e., }}
\newcommand{\eg}{\emph{e.g., }}
\newcommand{\wrt}{\emph{w.r.t. }}
\newcommand{\aka}{\emph{aka. }}
\newcommand{\cten}[1]{\bm{\mathcal{#1}}}
\newlength\savedwidth
\def\BibTeX{{\rm B\kern-.05em{\sc i\kern-.025em b}\kern-.08em
		T\kern-.1667em\lower.7ex\hbox{E}\kern-.125emX}}
\begin{document}
\title{Cross-GCN: Enhancing Graph Convolutional Network with $k$-Order Feature Interactions}



%
\author{Fuli~Feng,
	Xiangnan~He,
	Hanwang~Zhang,
	Tat-Seng~Chua
	\IEEEcompsocitemizethanks{
		\IEEEcompsocthanksitem F. Feng and TS. Chua are with School of Computing, National University of Singapore, Computing 1, Computing Drive, 117417, Singapore.
		E-mail: fulifeng93@gmail.com, dcscts@nus.edu.sg.
		\protect\\
		\IEEEcompsocthanksitem X. He (corresponding author) is with School of Information Science and Technology, University of Science and Technology of China, Hefei, China.
		E-mail: xiangnanhe@gmail.com.
		\protect\\ 
		\IEEEcompsocthanksitem H. Zhang is with the School of Computer Science \& Engineering, Nanyang Technological University, Singapore. E-mail: see http://www.ntu.edu.sg/home/hanwangzhang/
	}
}

\markboth{IEEE TRANSACTIONS ON KNOWLEDGE AND DATA ENGINEERING, SUBMISSION 2019}{Feng \MakeLowercase{\textit{et al.}}: Cross-GCN: Enhancing Graph Convolutional Network with $k$-Order Feature Interactions}
\IEEEtitleabstractindextext{
\begin{abstract}
Graph Convolutional Network (GCN) is an emerging technique that performs learning and reasoning on graph data. It operates feature learning on the graph structure, through aggregating the features of the neighbor nodes to obtain the embedding of each target node. Owing to the strong representation power, recent research shows that GCN achieves state-of-the-art performance on several tasks such as recommendation and linked document classification.

Despite its effectiveness, we argue that existing designs of GCN forgo modeling cross features, making GCN less effective for tasks or data where cross features are important. Although neural network can approximate any continuous function, including the multiplication operator for modeling feature crosses, it can be rather inefficient to do so (\ie wasting many parameters at the risk of overfitting) if there is no explicit design. 

To this end, we design a new operator named \textit{Cross-feature Graph Convolution}, which explicitly models the arbitrary-order cross features with complexity linear to feature dimension and order size. We term our proposed architecture as \textit{Cross-GCN}, and conduct experiments on three graphs to validate its effectiveness. Extensive analysis validates the utility of explicitly modeling cross features in GCN, especially for feature learning at lower layers.
\end{abstract}

\begin{IEEEkeywords}
	Cross-Feature, Graph-based Learning, Graph Neural Networks
\end{IEEEkeywords}}

\maketitle
\IEEEdisplaynontitleabstractindextext
\IEEEpeerreviewmaketitle
\section{Introduction}
As a data structure, graph has been intensively used in information retrieval applications, ranging from search engines~\cite{xiong2018towards}, digtal libaries~\cite{wu2015citeseerx}, to recommender systems~\cite{ying2018graph,he2017birank} and question-answering systems~\cite{sun2015open}. For example, in Web search, the structure of Web-page graph has been mined to estimate the page importance~\cite{page1999pagerank}; in recommender systems, the interaction graph between users and items provides rich signal about collaborative filtering~\cite{he2017birank}. Along with the trend of deep learning, research on graph mining has been shifted from structure understanding to representation learning (\aka feature learning)~\cite{gao2018bine}, which offers a universal way to perform predictive analytics (\eg node classification, and link prediction) on discrete and high-dimensional graph data. 

\begin{figure}[t]
	\centering
	\includegraphics[width=0.48\textwidth]{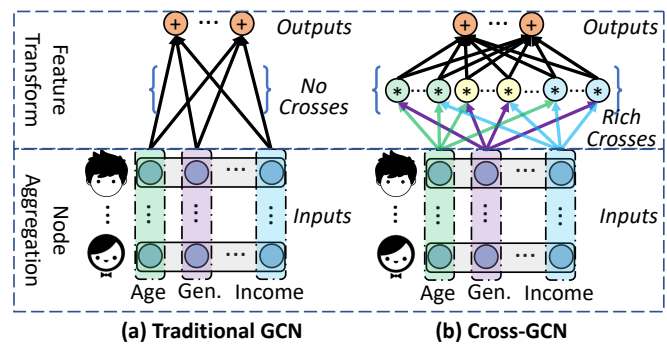}
	\caption{Illustration of traditional GCN and Cross-GCN. Traditional GCN performs feature transform with simple matrix mapping, which assumes no crosses between features. Our Cross-GCN augments feature transformation by modeling the rich crosses between features. Better viewed in color.}
	\label{fig:gcn}
\end{figure}
Owing to the extraordinary representation ability, \textit{Graph Convolutional Networks} (GCNs) have become a promising solution for representation learning over graphs~\cite{kipf2017semi,hamilton2017inductive,xu2019powerful}. Generally, GCNs learn node representations in a low-dimensional latent space from raw input features and node connections with multiple graph convolutional layers. As shown in Figure~\ref{fig:gcn}(a), a traditional GCN layer typically contains two components: \textit{node aggregation module} and \textit{feature transformation module}. The node aggregation module augments the representation of a targeted node via fusing information from its connected nodes based on the assumption that the properties of connected nodes would also reflect the property of the target node. The feature transformation module transforms the input features  into higher-level hidden features that better describe the node. Current research on GCNs mainly focuses on developing node aggregation modules emphasizing different connection properties such as local similarity~\cite{kipf2017semi}, multi-hop connectivity~\cite{defferrard2016convolutional}, and structural similarity~\cite{donnat2018learning}, nevertheless, while simply performs feature transformation with matrix mapping.

\begin{figure}[]
	\centering
	\mbox{
		\hspace{-0.15in}
		\subfigure[Training]{
			\label{fig:hp_dropout}
			\includegraphics[width=0.26\textwidth]{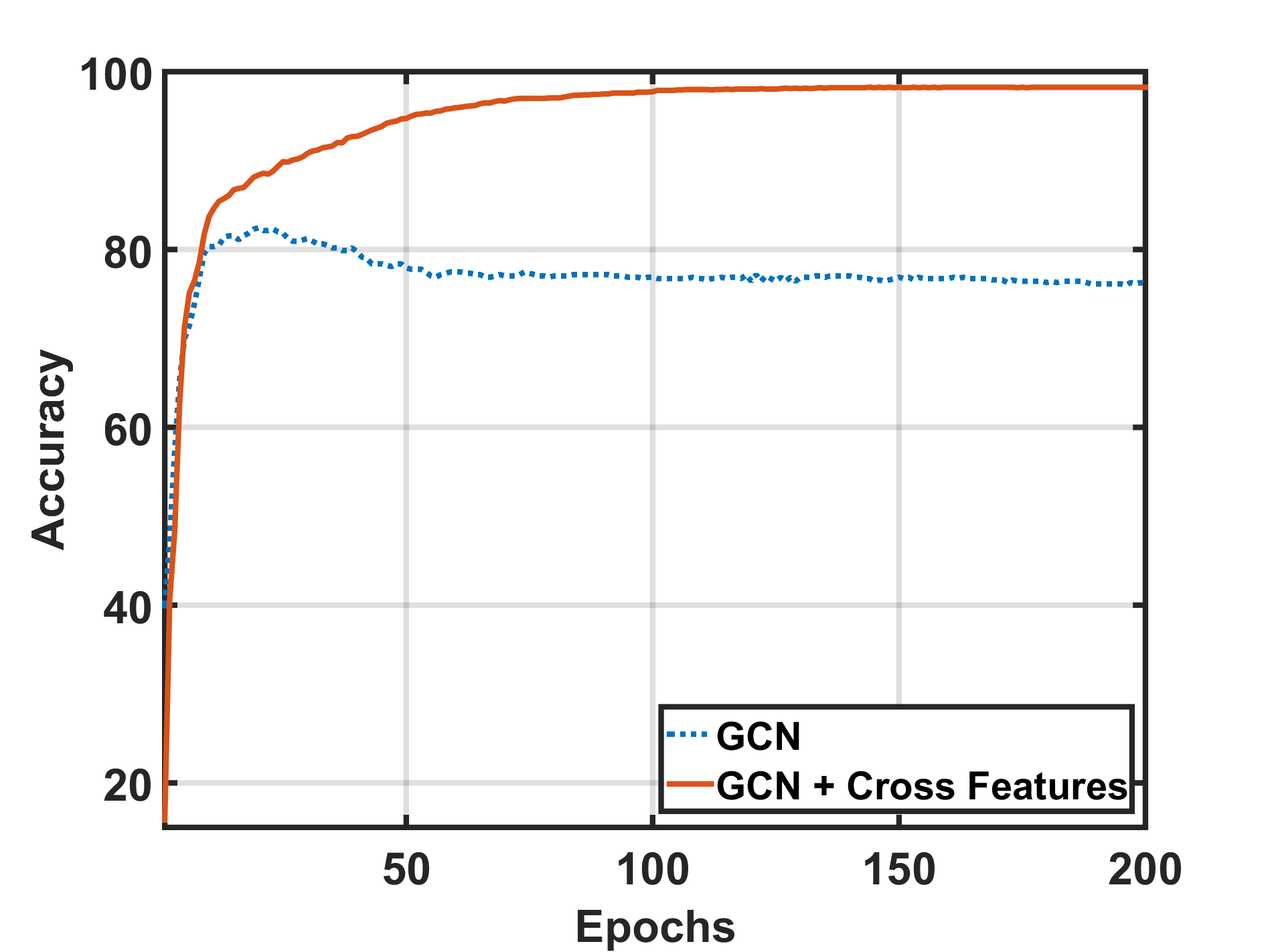}
		}
		\hspace{-0.25in}
		\subfigure[Testing]{
			\label{fig:hp_weight}
			\includegraphics[width=0.26\textwidth]{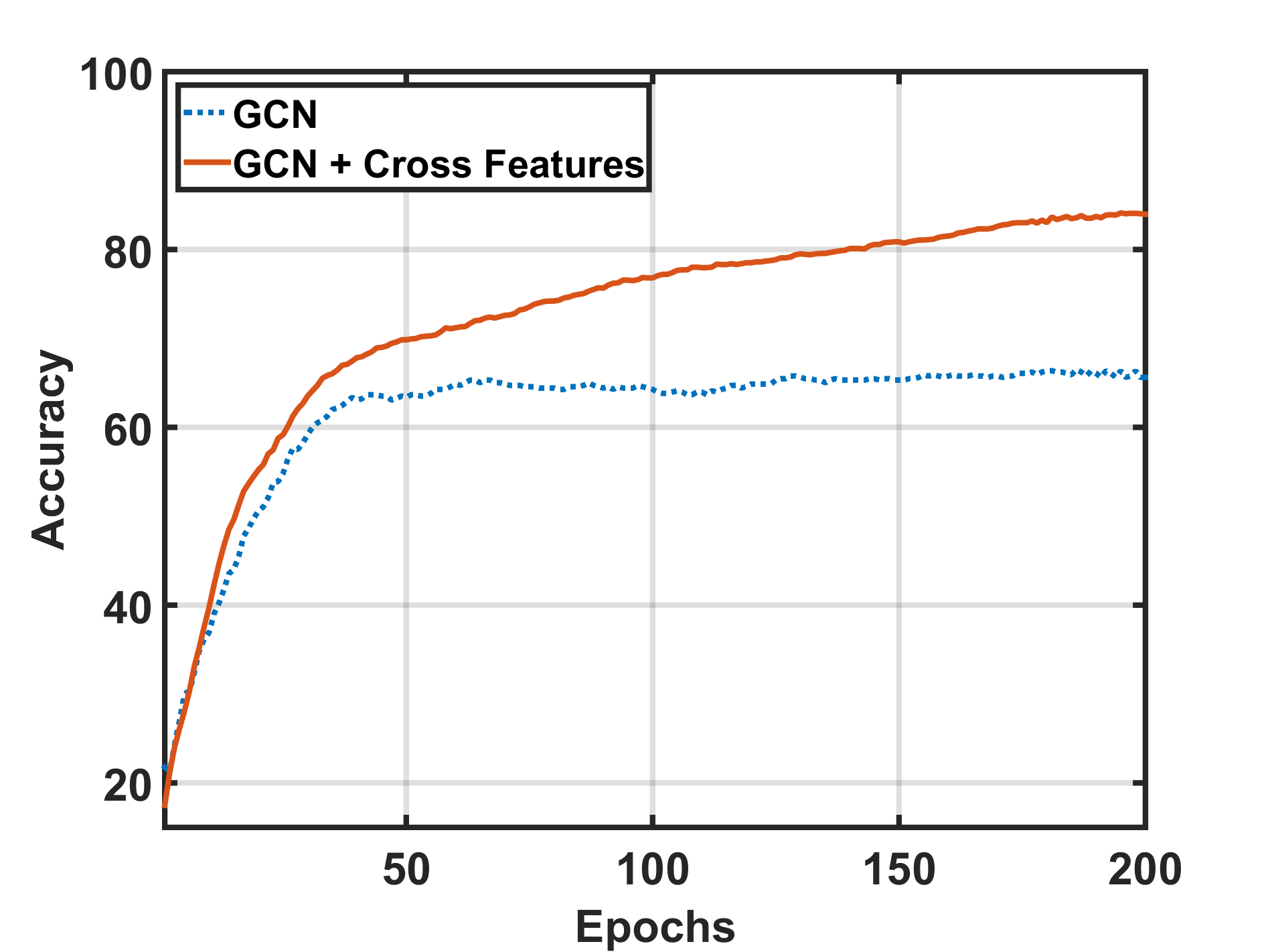}
		}
	}
	\caption{Training (left) and testing (right) accuracy on a citation graph (described in Section~\ref{sss:dataset}) of GCN (two layers) with original features and cross features as inputs.
	}
	\label{fig:gcn_vs_cgcn}
\end{figure}
We argue that existing GCN layers are hardly able to capture \textit{cross features}\footnote{In the following, we interchangeably use \textit{feature interaction} and \textit{cross feature}.}~\cite{lian2018xdeepfm}, which are essential for the success of many graph applications~\cite{pang2017deeprank,he2017neural,wang2017gated,wang2017deep}. For example, by capturing a cross feature: \textit{gender=female \& age=[20,25] \& income=\$10,000/month}, a recommender system could obtain better user representation and make more accurate product recommendations (\eg whether to purchase the latest version of iPhone)~\cite{wang2018tem}. To date, the simple matrix mapping used in most existing GCN layers, while equipped with non-linear activation, can hardly capture such cross features. As such, despite the extraordinary ability of GCNs to incorporate connection properties, we argue that their representation ability could be substantially improved by sharpening the feature transformation module to also encode cross features. 

Figure~\ref{fig:gcn_vs_cgcn} provides an empirical evidence on the weak ability of GCN in capturing cross features. We train a two-layer GCN on a citation graph with raw features and cross features as inputs, respectively. As can be seen, for node classification, training GCN with cross features achieves much higher accuracy on both training and testing sets than training with original features. It validates that explicitly considering cross features can enhance the representation ability of GCNs. Nevertheless, heuristic methods that either manually construct and select cross features or brutally enumerate all interactions are unaffordable for many real-world graphs with a large number of features, which motivates us to develop new graph convolution operations with feature interactions considered in GCN.

The main challenge of modeling cross feature is efficiency since the number of cross features grows dramatically with feature dimension. To tackle this challenge, we first devise a new operator named \textit{Cross-feature Graph Convolution}. The new operator contributes a new feature transformation module encoding cross features at arbitrary orders with a complexity linear to the feature dimension and order size. 
Thereafter, we develop a new solution for graph-based representation learning, named \textit{Cross-GCN} (illustrated in Figure~\ref{fig:gcn}(b)), by stacking multiple Cross-feature Graph Convolutional layers. Experiments on three graphs of node classification demonstrate the remarkable effectiveness of Cross-GCN. Moreover, Cross-GCN achieves more significant improvements on data with sparse low-level features, indicating its proper application scenarios.

The main contributions of this paper are summarized as:
\begin{itemize}[leftmargin=*]
	\item We devise a new graph-oriented operator, \textit{Cross-feature Graph Convolution}, which explicitly captures arbitrary-order feature interactions with linearly increased parameters and computation complexity.
	\item By equipping the Cross-feature Graph Convolution operator, we develop a new graph-based learning solution, named \textit{Cross-GCN}, which has the same model, computation, and memory complexity as vanilla GCN.
    \item With node classification as the testing task, we conduct experiments on three graphs and demonstrate the effectiveness of the proposed Cross-GCN (implementation will be released upon acceptance).
\end{itemize}

In the remainder of this paper, Section~\ref{s:pre} introduces preliminary knowledge about GCN and cross feature. In Section~\ref{s:met}, we elaborate the methodology followed by discussion of experiments and related work in Section~\ref{s:exp} and Section~\ref{s:rel}, respectively. Lastly, we conclude the paper, and envision some future work in Section~\ref{s:con}.

\section{Preliminary}
\label{s:pre}
We first introduce some notations used in the following sections. We use bold capital letters (e.g. $\bm{X}$) and bold lowercase letters (e.g. $\bm{x}$) to denote matrices and vectors, respectively. In addition, bold capital script letters (\eg $\bm{\mathcal{X}}$) 
are used to represent tensors. 
Note that all vectors are in a column form if not otherwise specified, and $X_{ij}$ denotes the entry of matrix $\bm{X}$ at the row $i$ and column $j$. Lastly, element-wise product and tensor outer product are denoted as $\odot$ and $\otimes$, respectively. In Table~\ref{tab:terms}, we summarize some of the terms and notations.

\begin{table}[]
	\caption{Terms and notations used in the paper.}
	\vspace{-0.3cm}
	\label{tab:terms}
	\resizebox{0.48\textwidth}{!}{%
		\begin{tabular}{c|l}
			\hline
			Symbol  & Definition \\ \hline \hline
			$\bm{X} \in \mathbb{R}^{N \times D}$     &    features of $N$ nodes.  \\ 
			$\bm{Y} \in \mathbb{R}^{M \times L}$ & labels of $M$ nodes. \\
			$\bm{A} \in \mathbb{R}^{N \times N}$ & adjacency matrix of a graph. \\
			$\bm{D} \in \mathbb{R}^{N \times N}$ & degree matrix of a graph. \\
			$neighbor(\bm{x})\}$ & neighbors of a node. \\
			$\mathcal{F}$ & feature transformation operation. \\
			$\mathcal{N}$ & node aggregation operation. \\
			$f^k(\cdot)$ & $k$-order cross-feature transformation function. \\
			$x^k_{i_1 i_2 \cdots i_k}$ & a $k$-order cross feature. \\
			$\cten{X}^k$ & a $k$-order tensor of cross features. \\
			$\bm{h}^k \in \mathbb{R}^{E}$ & hidden vector encodes $k$-order cross features. \\
			$\bm{W}$, $\bm{b}$ & parameters for 1-order feature transformation. \\
			$\mathcal{\mathbf{\Theta}}$ & all model parameters.\\
			$\odot$ & element-wise product. \\
			$\otimes$ & tensor outer product. \\
			$\sigma$ & a non-linear activation function. \\
			\hline
		\end{tabular}%
	}
\end{table}

\subsection{Graph Convolutional Networks}
A graph with $N$ nodes is typically represented as an adjacency matrix $\bm{A} \in \mathbb{R}^{N \times N}$ associated with a feature matrix $\bm{X}=[\bm{x}_1,\bm{x}_2,\cdots,\bm{x}_N]^{T}\in\mathbb{R}^{N \times D}$. $\bm{A}$ is a binary matrix, where $A_{ij} = 1$ if there is an edge between node $i$ and $j$, otherwise $A_{ij} = 0$. $D$ is the dimension of the input features which varies in a wide range from hundreds to millions in different applications. Taking the graph data as inputs, each GCN layer learns node representations in a low-dimensional embedding space (the last layer makes predictions)~\cite{kipf2017semi,hamilton2017inductive,xu2019powerful}. Note that a node feature in a latent layer is one dimension of previous layer's output. In the first layer, a node feature is one of the raw node features (\ie one dimension of $\bm{x}$). To simplify the notations and presentation, in the following, we always take the first layer as example to elaborate the detail of a GCN layer.

As shown in Figure~\ref{fig:gcn}(a), a GCN layer mainly performs two operations: \textit{node aggregation} and \textit{feature transformation}, to learn node representations. For a target node, a GCN layer can be abstracted as:
\begin{align}
	\bm{x} = \mathcal{N}(\{\bm{x}_{n} | n \in neighbor(\bm{x})\}),~~~ \bm{h} = \mathcal{F}(\bm{x}),
	\label{eq:gcn_layer}
\end{align}
where $\mathcal{F}$ and $\mathcal{N}$ denote the feature transformation and node aggregation operations, respectively. $\bm{x} \in \mathbb{R}^D$ is the feature representation of the target node after aggregating neighbor node features. $\bm{h} \in \mathbb{R}^E$ is the latent representation of the target node.

\textbf{Node aggregation} enriches the representation of a target node by aggregating information from its neighbor nodes. The rationale is that the properties of neighbor nodes could reflect the properties of a target node. For instance, in a social network, having connections with many nodes with male gender and interests in playing video games indicates that the target node might be a teenager boy. Note that $neighbor(\bm{x})$ could also includes the target node since self-connections~\cite{kipf2017semi} are typically intentionally added. 
Research on GCN has been focusing on developing different formats of $\mathcal{N}$ to distill distinct information from $neighbor(\bm{x})$~\cite{kipf2017semi,hamilton2017inductive,xu2019powerful}. For instance, an average pooling filter out common properties from connected nodes, while a max pooling would identify representative elements among connected nodes~\cite{xu2019powerful}. 

\textbf{Feature transformation} 
projects the target node from the input feature space into a high-level latent space in order to represent the node more comprehensively. In most existing GCNs, $\mathcal{F}$ is implemented as a matrix mapping equipped with a non-linear activation, which is formulated as:
\begin{align}
    \bm{h} = \mathcal{F}(\bm{x}) = \sigma(\bm{W} \bm{x} + \bm{b}),\text{ with } h_e = \sigma\left(\bm{w}_e^T \bm{x} + b_e\right),
    \label{eq:iso_fea_trans}
\end{align}
where $\sigma$ is an activation function; $\bm{W} \in \mathbb{R}^{E \times D}$ and $\bm{b} \in \mathbb{R}^{E}$ are mapping matrix and bias vector, respectively. Technically, the feature transformation operation composes the input features into the number of $E$ hidden features via biased weighted sum. The $e$-th row in $\bm{W}$ corresponds to the weights for calculating the $e$-th hidden feature $h_e$ with the corresponding bias $b_e$. We term feature transformation in Equation~\ref{eq:iso_fea_trans} as \textit{1-order feature transformation} since none cross feature is considered.

\subsection{Cross Feature}
\label{ss:cross_feature}
A $k$-order \textit{cross feature} (\aka feature interaction) is a combination of $k$ input features~\cite{lian2018xdeepfm}, which could be formulated as:
\begin{align}
	x^k_{i_1 i_2 \cdots i_k} = \prod_{i \in \{i_1 i_2 \cdots i_k\}} x_{i},
	\label{eq:cross_fea}
\end{align}
where $x_{i}$ is the $i$-th dimension of the input features. For instance, a $3$-order feature interaction could happen on three input features: \textit{gender}, \textit{age}, and \textit{income}. Such feature interaction could categorize users into more subtle groups and might benefit the modeling of user profiles such as interests~\cite{wang2018tem}. 
Given a feature vector $\bm{x} \in \mathbb{R}^D$ with $D$ features, we consider all $k$-order crosses among the $D$ features, which can be organized into a $k$-order tensor:
\begin{align}
	\cten{X}^k = \underbrace{\bm{x} \otimes \cdots \otimes \bm{x}}_{\text{$k$ copies}} \in \mathbb{R}^{D \times \cdots \times D}.
\end{align}

Similar as Equation~\ref{eq:iso_fea_trans}, a \textit{$k$-order cross-feature transformation} calculates $E$ hidden features from all the cross features in $\cten{X}^k$. For the $e$-th hidden feature, the formulation is:
\begin{align}
h^k_{e} = \sigma\left(sum(\cten{X}^k \odot \cten{W}^k_e) + b_e\right),
\label{eq:cro_fea_trans}
\end{align}
where $\cten{W}^k_e$ is a $k$-order tensor of parameters with the same size as $\cten{X}^k$ to perform the transformation. Note that $\cten{W}^k_e$ would become the $\bm{w}_e$ in Equation~\ref{eq:iso_fea_trans} when $k=1$. $b_e$ is the bias for the $e$-th hidden feature. $sum$ is a summation over all entries of a tensor. 
The computational complexity of the transformation is $O(D^k)$, which makes it is unaffordable when $D$ is large. Therefore, the key research challenge of cross-feature transformation is to devise an elegant mechanism that reduces the complexity while maintaining the representation capability. In the following, we omit the activation function $\sigma$ and the bias $\bm{b}$ to simplify the presentation.

\section{Cross-GCN}
\label{s:met}
In this section, we elaborate the detail of the proposed \textit{Cross-feature Graph Convolution} operator in Section~\ref{ss:cgc} followed by its complexity analysis in Section~\ref{ss:complexity}. Lastly, we discuss the advantages of the proposed operator over existing solutions for modeling cross features in GCN.

\subsection{Cross-feature Graph Convolution}
\label{ss:cgc}
The proposed operator \textit{Cross-feature Graph Convolution} follows the general formulation of a GCN layer (Equation~\ref{eq:gcn_layer}). That is to say, the operator also performs two operations: \textit{node aggregation} and \textit{feature transformation}, to learn hidden node representations.
\begin{itemize}[leftmargin=*]
    \item 
    To efficiently and appropriately encode cross features, we devise a new \textit{cross-feature transformation module}, which could calculate cross-feature transformation at arbitrary orders with complexity linear to feature dimension and order size. 
    \item As our main target is to model cross features, we employ the node aggregation modules in existing GCNs\footnote{It is worthwhile to mention that we could also model feature interaction in node aggregation. That is to say, we consider interactions among different nodes. For simplicity, in this work, we only model feature interaction in feature transformation and will explore feature interaction in node aggregation in future work.}. For instance, we could use the \textit{average pooling}~\cite{hamilton2017inductive}, \textit{LSTM}~\cite{hamilton2017inductive}, and Attention Networks~\cite{velickovic2018graph}.
\end{itemize}


\subsubsection{Cross-feature Transformation Module}
Incorporating cross features has been found to achieve significant success in many applications such as search engines~\cite{pang2017deeprank}, QA systems~\cite{wang2017gated} and recommender systems~\cite{lian2018xdeepfm}. This is because the cross features could enrich the representation of a wide range of entities including Web articles and user-item pairs. Inspired by its success, Cross-feature Graph Convolution performs cross-feature transformation with order size up to $K$ to incorporate cross features. That is to say, the \textit{cross-feature transformation module} simultaneously performs cross-feature transformations at orders from 1 to $K$
, and fuses the outputs of the $K$ transformations into an overall output.

The key novelty of the proposed module is enabling cross-feature transformation in a linear complexity. Instead of directly evaluating Equation~\ref{eq:cro_fea_trans}, which has a polynomial complexity of $O(D^k)$, we propose a new operator by performing cross feature transformation in a recursive way:
\begin{align}
\label{eq:cro_fea_tran_mod}
\bm{h}^k = f^k(\bm{x}) = (\bm{W}^k \bm{x}) \odot f^{k - 1}(\bm{x}), \text{ and } f^0(\bm{x}) = \bm{1}.
\end{align}
$\bm{x} \in \mathbb{R}^{D}$ and $\bm{h}^k \in \mathbb{R}^{E}$ denotes the inputs and outputs of the $k$-order cross-feature transformation module, respectively. $\bm{W}^k \in \mathbb{R}^{E \times D}$ is a matrix of transformation parameters. 
$f^k(\cdot)$ denotes a \textit{$k$-order cross-feature transformation function} which encodes $k$-order cross features into a hidden vector $\bm{h}^k \in \mathbb{R}^{E}$. For instance, $f^2(\bm{x}) = (\bm{W}^2 \bm{x}) \odot (\bm{W}^1 \bm{x})$ performs the $2$-order cross-feature transformation.

In the next, we prove that the proposed Equation 6 provides a low-rank approximation to the $k$-order cross-feature transformation of Equation 5.
\begin{theorem*} 
Let $\mathcal{C}: \mathbb{R}^D \rightarrow \mathbb{R}^{E}$ be a $k$-order cross-feature transformation $\bm{h}^k = \mathcal{C}(\bm{x})$, which calculates each output dimension $h^k$ as: $h^k = sum(\cten{X}^k \odot \cten{W}^k)$ (Equation~\ref{eq:cro_fea_trans})\footnote{Note that we omit the subscript of dimension $e$ to simplify the notation.}. We have: $\mathcal{C}(\bm{x}) \approx (\bm{W}^k \bm{x}) \odot \cdots \odot (\bm{W}^1 \bm{x})$ where $\bm{W}^k \in \mathbb{R}^{E \times D}$.
\end{theorem*}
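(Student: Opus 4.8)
The plan is to unroll the recursion in Equation~\ref{eq:cro_fea_tran_mod} and recognize the resulting expression as the contraction of $\cten{X}^k$ with a \emph{rank-one} (CP) approximation of the parameter tensor $\cten{W}^k$ in Equation~\ref{eq:cro_fea_trans}. First I would expand the definition of a single output: since $\cten{X}^k=\bm{x}\otimes\cdots\otimes\bm{x}$ has entries $X^k_{i_1\cdots i_k}=x_{i_1}\cdots x_{i_k}$, we get $h^k=\sum_{i_1,\dots,i_k}W^k_{i_1\cdots i_k}\,x_{i_1}\cdots x_{i_k}$, a homogeneous degree-$k$ polynomial in $\bm{x}$. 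Because $\cten{X}^k$ is symmetric, only the symmetric part of $\cten{W}^k$ affects this value, so I may treat $\cten{W}^k$ as an arbitrary $k$-way tensor in $\mathbb{R}^{D\times\cdots\times D}$ without loss of generality; this is what makes it legitimate to later use \emph{distinct} matrices $\bm{W}^1,\dots,\bm{W}^k$ rather than a single shared one.

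Next I would invoke the CANDECOMP/PARAFAC decomposition: any such tensor is a sum of rank-one tensors, and truncating to a single term yields its best rank-one approximation $\cten{W}^k\approx\bm{w}^1\otimes\cdots\otimes\bm{w}^k$ with each $\bm{w}^j\in\mathbb{R}^D$. Substituting this and distributing the sum over coordinates, the contraction factorizes:
\begin{align}
h^k \approx \sum_{i_1,\dots,i_k} w^1_{i_1}\cdots w^k_{i_k}\, x_{i_1}\cdots x_{i_k} = \prod_{j=1}^{k}\Big(\sum_{i} w^j_{i} x_{i}\Big) = \prod_{j=1}^{k}\big((\bm{w}^j)^T\bm{x}\big). \nonumber
\end{align}
Carrying this out for each of the $E$ output dimensions and stacking the vectors $\bm{w}^j$ (for output $e$) as the $e$-th row of a matrix $\bm{W}^j\in\mathbb{R}^{E\times D}$ converts the per-dimension products into the element-wise product $(\bm{W}^k\bm{x})\odot\cdots\odot(\bm{W}^1\bm{x})$ — which is exactly what the recursion $f^k(\bm{x})=(\bm{W}^k\bm{x})\odot f^{k-1}(\bm{x})$ with $f^0(\bm{x})=\bm{1}$ unrolls to. This gives $\mathcal{C}(\bm{x})\approx(\bm{W}^k\bm{x})\odot\cdots\odot(\bm{W}^1\bm{x})$.

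The main obstacle is the meaning of ``$\approx$'': a product $\prod_j(\bm{w}^j)^T\bm{x}$ is a very restricted degree-$k$ polynomial (it factors into linear forms), so one rank-one tensor cannot in general match an arbitrary $\cten{W}^k$ exactly. I would handle this by stating precisely that the approximation gap equals the error of the best rank-one CP approximation of (the symmetrization of) $\cten{W}^k$, and that the representation becomes exact if one sums $R$ rank-one terms, where $R$ is the CP rank — at the price of $R$ parallel copies of the recursion. Taking $R=1$ is the linear-complexity choice the paper emphasizes, so I would present the single-term version as the operator and the multi-term version as the tightness/exactness remark. A secondary point to spell out carefully is the symmetry reduction above, since it is what licenses the product-of-inner-products factorization with independent matrices.
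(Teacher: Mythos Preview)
Your proposal is correct and follows essentially the same route as the paper: approximate each output's weight tensor $\cten{W}^k$ by a single rank-one CP term $\bm{w}^1\otimes\cdots\otimes\bm{w}^k$, factor the resulting contraction into $\prod_j(\bm{w}^j)^T\bm{x}$, and stack across the $E$ outputs to obtain $(\bm{W}^k\bm{x})\odot\cdots\odot(\bm{W}^1\bm{x})$. The paper first works out the $k=2$ case via matrix factorization as a warm-up before invoking canonical tensor factorization for general $k$, and it does not include your symmetry observation or the rank-$R$ exactness remark, but the core argument is identical.
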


\begin{proof} To facilitate understanding, we first take the cross-feature transformation at \textit{second-order} as an example to present the derivation. Originally, $2$-order cross-feature transformation (Equation~\ref{eq:cro_fea_trans}) calculates a hidden feature as:
\begin{align}
    h^2 = sum(\cten{X}^2 \odot \cten{W}^2) = sum((\bm{x} \bm{x}^T) \odot \bm{W})
    \label{eq:2nd_cro_fea_trans}
\end{align}
where $\bm{W} \in \mathbb{R}^{D \times D}$ is the mapping matrix. Note that $h^2$ is just one dimension of $\bm{h}^2 \in \mathbb{R}^E$. In total, the number of needed parameters to calculate $\bm{h}^2$ would be $E \cdot D^2$. The key of accelerating the transformation is to reduce the number of parameters. 
According to the theory of matrix factorization~\cite{koren2009matrix}, we can decompose parameter matrix ($\bm{W}$) into latent factors $\bm{w}$ and $\bar{\bm{w}} \in \mathbb{R}^{D}$ subject to $\bm{W} \approx \bm{w} \bar{\bm{w}}^T$. Subsequently, we can approximate Equation~\ref{eq:2nd_cro_fea_trans} as:
\begin{align}
   h^2 \approx sum\left((\bm{x} \bm{x}^T) \odot (\bm{w} \bar{\bm{w}}^T)\right)  = sum\left((\bm{w} \odot \bm{x}) (\bar{\bm{w}} \odot \bm{x})^T\right).
    \label{eq:transform_1}
\end{align}
Furthermore, we can transform it into:
\begin{align}
    h^2 \approx & \sum_{i = 1}^D \sum_{j = 1}^D w_i x_i * \bar{w}_j x_j = \sum_{i = 1}^D w_i x_i * \sum_{j = 1}^D \bar{w}_j x_j \notag \\
    = & \sum_{i = 1}^D w_i x_i * (\bar{\bm{w}}^T \bm{x}) = (\bm{w}^T \bm{x}) (\bar{\bm{w}}^T \bm{x}).
    \label{eq:transform_2}
\end{align}
Following the same procedure, we can transform the calculation of all $E$ hidden features into the format based on latent factors (\ie $(\bm{w}^T \bm{x}) (\bar{\bm{w}}^T \bm{x})$). In total, we need $E$ pairs of latent factors. By organizing the latent factors into two parameter matrices $\bm{W}$ and $\bar{\bm{W}} \in \mathbb{R}^{E \times D}$ (one row for each hidden feature), the $2$-order cross-feature transformation can be transformed into:
\begin{align}
    \bm{h}^2 = (\bm{W} \bm{x}) \odot (\bar{\bm{W}} \bm{x}),
    \label{eq:sec_order_pr}
\end{align}
which has the same form as the proposed transformation module $\bm{h}^2  = f^2(\bm{x}) = (\bm{W}^2 \bm{x}) \odot (\bm{W}^1 \bm{x})$.

$k$-\textit{order} ($k \geq 2$). For a $k$-order cross-feature transformation, directly calculating a hidden feature needs the number of $D^k$ parameters ($\cten{W}^k$ in Equation~\ref{eq:cro_fea_trans}). 
By extending the matrix factorization to canonical tensor factorization~\cite{hitchcock1927expression}, we can factorize $\cten{W}^k$ into $k$ vectors of latent factors with dimension of $D$: $\{\bm{w}^{1}, \cdots, \bm{w}^{k}\}$ subject to $\cten{W}^k \approx \bm{w}^{1} \otimes \cdots \otimes \bm{w}^k$. Following Equation~\ref{eq:transform_1} and \ref{eq:transform_2}, we can calculate $h^k$ by:
\begin{align}
    h^k = & sum\left(\cten{X}^k \odot \cten{W}^k\right) = sum\left((\bm{w}^k \odot \bm{x}) \otimes \cdots (\bm{w}^1 \odot \bm{x})\right) \notag \\
    = & ({\bm{w}^k}^T \bm{x}) \odot \cdots \odot ({\bm{w}^1}^T\bm{x}),
\end{align}
where the number of parameters is $kD$. 
Similarly, we use $E$ sets of latent factors to calculate $E$ hidden features. We organize the latent factors into $k$ different matrices, and obtain: $\bm{h}^k = (\bm{W}^k \bm{x}) \odot \cdots \odot (\bm{W}^1 \bm{x})$ which is the proposed $f^k(\bm{x})$. 
\end{proof}
It is worthwhile to emphasize that the proposed module does not explicitly conduct the decomposition --- insteading of learning a costly $k$-order tensor $\cten{W}^k \in \mathbb{R}^{D \times \cdots \times D}$, we learn $k$ latent vectors $\{\bm{w}^{1}, \cdots, \bm{w}^{k}\}$, which can be seen as approximating $\cten{W}^k$ with canonical tensor factorization~\cite{hitchcock1927expression}.


\subsubsection{Order Aggregation}
Research on cross features has shown that cross features in different orders might have different impacts on the prediction~\cite{lian2018xdeepfm}. In other words, in the cross-feature transformation, hidden features transformed from cross features at different orders might contribute differently to the output. 
As such the cross-feature transformation module further performs order aggregation which is denoted as a function $a(\cdot)$. The order aggregation function $a(\cdot)$ first aggregates hidden features transformed from interactions at different orders ($\{\bm{h}^k | k = 1, \cdots, K\}$) into a single vector. We can perform the aggregation using many operations such as a pooling function (\eg mean pooling) or a neural modeling component (\eg LSTM~\cite{hochreiter1997long}). After aggregation, $a$ then adds bias and performs non-linear activation on the hidden vector to obtain the output ($\bm{h}$). 

In this work, to avoid introducing additional model parameters, we simply implement the order aggregation function $a(\cdot)$ as,
\begin{align}
\bm{h} = a(\{\bm{h}^k | k = 1, \cdots, K\}) = \sigma\left(\sum_{k=1}^{K} \alpha_k \bm{h}^k\right) + \bm{b}.
\end{align}
$\alpha_k$ regulates the information from $k$-order cross features flowing into the final node representation. When assigned with a larger value, the $k$-order cross features contribute more to the final output $\bm{h}$. We leave the exploration of advanced implementations of $a(\cdot)$ to future work since this work is focused on the cross-feature transformation.

\subsection{Complexity Analysis}
\label{ss:complexity}
For practical applications, in addition to effectiveness, the usability of a neural network operator also depends on three complexity criteria: 1) \textit{model complexity}, 2) \textit{computation complexity}, and \textit{memory complexity}. As such, we carefully analyze the complexity of the proposed cross-feature transformation module. Note that we omit the cost of order aggregation function $a(\cdot)$ since it is only affected by the output dimension ($E$) which is typically small. We focus on the complexity of the transformation functions ($f^k(\cdot)$) which is sensitive to input dimension ($D$) where $E \ll D$. 
\begin{itemize}[leftmargin=*]
    \item \textbf{Model complexity}. As shown in Equation~\ref{eq:cro_fea_trans}, a cross-feature transformation module considering $K$-order cross features contains $K$ parameter matrices $\{\bm{W}^k | k = 1, \cdots, K\}$ where $\bm{W}^k \in \mathbb{R}^{E \times D}$. Therefore, the overall number of model parameters is $O(KED)$ which increases linearly with feature dimension and order size. In addition, it means that the model parameters is $K$ times as many as that of the conventional feature transformation module (Equation~\ref{eq:iso_fea_trans}) in existing GCNs.
    \item \textbf{Computation complexity}. The computation cost mainly comes from the $K$ times matrix multiplication between $\bm{W}^k$ and $\bm{x}$. The computation complexity is $O(KED)$ which increases linearly with $D$. It should be noted that the computation can be easily accelerated by simply computing the $K$ matrix multiplications in parallel.
    \item \textbf{Memory complexity}. The main memory cost comes from storing model parameters which have the complexity of $O(KED)$. Note that most neural networks rely back-propagation algorithm (BP)~\cite{hecht1992theory} to learn the model parameters. As such, in the training phase, storing gradients would bring additional memory cost of which the complexity depends on the implementation of BP. Roughly, the additional cost should also be in order of $O(KED)$.
\end{itemize}
We conclude that the proposed cross-feature transformation module has complexities linear to feature dimension and order size. In addition, the overhead of the proposed module as compared to the conventional feature transformation module (Equation~\ref{eq:iso_fea_trans}) is linearly dependent on $K$. Considering that order size ($K$) is a hyper-parameter typically smaller than 5, the overhead should be acceptable.

\textbf{Discussion}. 
To the best of our knowledge, \textit{Graph Isomorphism Network} (GIN)~\cite{xu2019powerful} adopting a Multi-layer Perceptron (MLP) as the feature transformation module in each layer is the only existing GCN having the potential to capture cross features. The reason is that a MLP can approximate any arbitrary transformation according to the universal approximation theorem~\cite{hornik1991approximation}. However, existing research~\cite{andoni2014learning,beutel2018latent} has shown that it might take a large number of hidden units to appropriately approximate feature interactions, which could be much larger than the dimension of input features (\ie $D$). Therefore, the computational overhead is unaffordable. In addition, implicitly modeling feature interactions with MLP may lead to downsides that a GIN layer cannot control the maximum order of feature interactions and the strength of different orders. 
\section{Experiments}
\label{s:exp}
We evaluate the proposed method in the node classification task, which covers a wide range of applications such as user profiling~\cite{qiu2018deepinf}, fraud detection~\cite{li2014search}, and text classification~\cite{yao2019graph}. Note that it could be easily adapted to solve the other tasks such as link prediction and community detection. In the problem setting of node classification, a graph $G = \{\bm{A}, \bm{X}\}$ with $N$ nodes, associated with labels ($\bm{Y}$) of a portion of nodes. For simplification, we index the labeled nodes and unlabeled nodes in the range of $[1, M]$ and $(M, N]$. We train our model on the labeled nodes by optimizing: 
\begin{align}
	\Gamma = \sum_{i = 1}^M l(\bm{\hat{y}}_i, \bm{y}_i) + \lambda \|\bm{\Theta}\|_F^2,
\end{align}
where $l$ is a classification loss function such as cross-entropy. $\bm{\Theta}$ denotes all the model parameters.
\subsection{Experimental Settings}
\subsubsection{Datasets}
\label{sss:dataset}
In accordance with~\cite{kipf2017semi}, we test our model on citation graph.
\begin{itemize}[leftmargin=*]
	\item \textbf{Citation graphs.} Citation graphs represent documents and their citation relations with nodes and edges where features (extracted from document contents) and labels (topic of the document) are associated with each node. Following~\cite{kipf2017semi}, we adopt three citation graphs, \textbf{Cora}, \textbf{Citeseer} and \textbf{Pubmed}~\cite{sen2008collective}, with sparse bag-of-word document features. We follow the extreme data split in~\cite{kipf2017semi}, that is, 20 labeled nodes per class are used for training; 500 and 1,000 nodes are used for validation and testing, respectively.
	\item \textbf{Citeseer-Cross.} Moreover, we construct a semi-real dataset based on the \textbf{Citeseer} citation graph to test whether cross feature are properly considered. We intentionally compile features for nodes so that labels are highly correlated with $2$-order cross features. In particular, 
	we use a feature vector $\bm{x}$ with dimension of 12 to describe each node, which is initialized with random values sampled from a standard Gaussian distribution. 
	We then intentionally change the sign of feature values according to the label of a node so that there would be a cross feature that clearly indicates the value of label. For a node in class $c \in [1, 6]$, \ie $y_c = 1$, the edited feature would satisfy:
	$$ \left\{
	\begin{aligned}
	& x_{i * 2 - 1} * x_{i * 2} > 0, &~i = c~(y_i = 1), \\
	& x_{i * 2 - 1} * x_{i * 2} \leq 0, &~ i \neq c \& i \in [1, 6]~(y_i = 0). 
	\label{eqn:synthetic_rule}
	\end{aligned}
	\right.
	$$
	Note that the sign of a cross feature (\eg $x_1 * x_2$) is highly correlated with the label of a class (\eg $c = 1$). As such, a method would benefit from modeling 2-order feature interactions. Note that \textbf{Citeseer-Cross} and \textbf{Citeseer} have the same graph structure and data splits.
\end{itemize}
\begin{table}[]
	\caption{Statistics of the experimental datasets.}
	\label{tab:dataset}
	\resizebox{0.48\textwidth}{!}{%
		\begin{tabular}{c|ccccc}
			\hline
			Dataset  & \#Nodes & \#Edges & \#Classes & \#Features & Label rate \\ \hline \hline
			Citeseer & 3,312   & 4,732   & 6         & 3,703      & 0.036      \\ 
			Pubmed     &  19,717   & 44,338   & 3         & 500      & 0.003      \\ 
			Reddit400K     & 30,000  & 386,742 & 41       & 602      & 0.6      \\ 
			Citeseer-Cross & 3,312   & 4,732   & 6         & 12      & 0.036      \\ \hline
		\end{tabular}%
	}
\end{table}
In Table~\ref{tab:dataset}, we summarize the statistics of all experimental datasets.
\subsubsection{Methods}
\begin{itemize}[leftmargin=*]
	\item \textbf{SemiEmb}~\cite{weston2012deep}: It is a representative graph-based learning method based on Laplacian regularization which encourages connected nodes to have close embeddings and predictions. Here, we use a MLP with graph Laplacian regularization on the predictions.
	\item \textbf{DeepWalk}~\cite{perozzi2014deepwalk}: DeepWalk is a widely used graph representation learning method based on the skip-gram technique, which learns node representation by predicting node contexts that are generated by performing random walk on graph. The learned embeddings are fed into a MLP for label prediction.
	\item \textbf{GCN}~\cite{kipf2017semi}: is a general GCN model performing feature transformation with matrix mapping without explicit feature interactions, and node aggregation with a pooling function.
	\item \textbf{GIN}~\cite{xu2019powerful}: GIN is a generalization of vanilla GCN, performing feature transformation with a MLP in each convolution layer.
	\item \textbf{Cross-GCN}: is the simplest implementation of the proposed method with $K=2$ as order size.
\end{itemize}
Note that we omit potential graph convolution-based baselines revolving on the modeling of node aggregation such as~\cite{velickovic2018graph,atwood2016diffusion} since we focus on the modeling of feature transformation. In addition, for each model, we only test the single layer and two layer versions leaving the exploration of deeper models in future work.

\subsubsection{Parameter Settings}
We implement \textbf{Cross-GCN}, \textbf{GIN}\footnote{We remove the batch-normalization and decay of learning rate applied in~\cite{xu2019powerful} to speed up the convergence.} and \textbf{SemiEmb} with Tensorflow 1.8.0\footnote{\url{https://www.tensorflow.org/versions/r1.8/}.} and adopt the public implementation of \textbf{GCN}\footnote{\url{https://github.com/tkipf/gcn}.}. Following the implementation of GCN, we apply dropout and weight decay on each layer to prevent overfitting. 
To avoid over-tuning of hyper-parameters, we set \textbf{Cross-GCN} and \textbf{GIN} with the optimal hyper-parameters of \textbf{GCN} released in the original paper~\cite{kipf2017semi} except the size of hidden layers ($E$). In other words, we set dropout ratio ($\rho$) and weight decay ($\lambda$) for the $L_2$-norm with values of 0.5 and $5e\text{-}4$, respectively. For the size of hidden layer ($E$), we perform grid-search within the range of [16, 32, 64, 128].
All the models are optimized via Adam~\cite{kingma2014adam} with initial learning rate of 0.01. Following prior work~\cite{wu2019simplifying}, we also test the methods on 20 random splits of the training set while keeping the validation and test sets unchanged\footnote{Note that we repeat the test 20 times rather than 10 as the original paper~\cite{wu2019simplifying} to avoid large variance. The original paper avoids large variance by removing runs with outlier performance.}. For each run, the epoch a model achieves best performance on the validation is selected to report performance.

\subsection{Citeseer-Cross}
\subsubsection{Performance Comparison}
\begin{table}[]
	\caption{Performance comparison of GCN, GIN, and Cross-GCN with single layer and two layers \wrt test accuracy on the Citeseer-Cross dataset.}
	\label{tab:syngraph_all}
	\centering
	\resizebox{0.35\textwidth}{!}{%
		\begin{tabular}{c|cc}
			\hline
			& Single Layer & Two Layers \\ \hline \hline
			GCN       & 25.1$\pm$1.4     & 62.9$\pm$1.7   \\ 
			GIN       & 48.6$\pm$1.0     & 45.7$\pm$7.9   \\ 
			Cross-GCN & \textbf{76.9$\pm$0.8}     & \textbf{76.2$\pm$0.7}   \\ \hline
		\end{tabular}%
	}
\end{table}
Recall that the target of this work is to explicitly consider cross features in GCN. To investigate the effect of modeling feature crosses, we first test \textbf{GCN}, \textbf{GIN}, and \textbf{Cross-SGCN} on the semi-real dataset (Citeseer-Cross). Table~\ref{tab:syngraph_all} presents the performance of the models with single layer and two layers. Note that the main difference across the compared models is the feature transformation module. From the results, we have the following observations:
\begin{itemize}[leftmargin=*]
	\item In all cases, the proposed \textbf{Cross-GCN} outperforms \textbf{GCN} with significant improvements. This validates the effectiveness of the cross-feature transformation module. Moreover, this result demonstrates the benefit of explicitly modeling feature crosses in each layer as compared to relying totally on the multi-layer structure.
	\item \textbf{Cross-GCN} also beats \textbf{GIN} which indicates the advantages of the proposed cross-feature transformation module over a MLP in modeling feature crosses. Similar observation has been presented in previous research~\cite{beutel2018latent}. Such observations suggest that we should pay more attention on explicit cross-feature modeling.
	\item The performance of \textbf{Cross-GCN} with a single layer surpasses \textbf{GCN} \textbf{GIN} with two layers. This result further verifies the benefit of explicit cross-feature modeling. In addition, it indicates that modeling feature crosses would help to reduce network depth in some situations, which would facilitate model optimization.
\end{itemize}

\subsubsection{Impact of Model Complexity}
\begin{figure}[t]
	\centering
	\includegraphics[width=0.33\textwidth,trim={0 0 0 0cm}, clip]{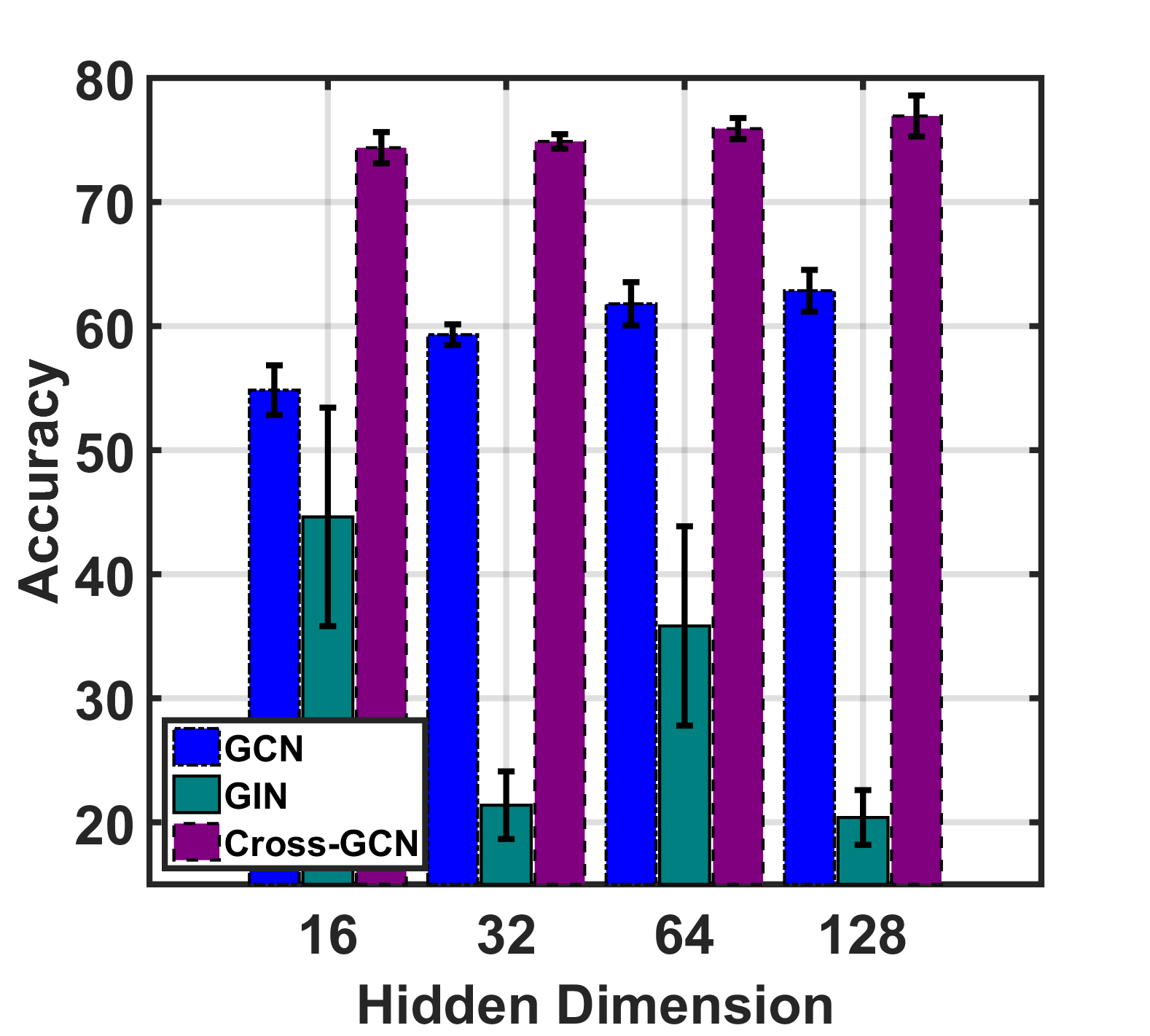}
	\caption{Performance comparison of GCN, GIN, and Cross-GCN under different size of hidden layer.
	}
	\label{fig:syngraph_hidden1}
\end{figure}
We then investigate the impact of model complexity by testing \textbf{GCN}, \textbf{GIN}, and \textbf{Cross-GCN} with different numbers of hidden units ($E$ for the first layer). Figure~\ref{fig:syngraph_hidden1} shows the performance \wrt test accuracy with $E = [16, 32, 64, 128]$. From the results, we have the following observations:
\begin{itemize}[leftmargin=*]
    \item In all cases, \textbf{Cross-GCN} achieves significantly better performance than \textbf{GCN} and \textbf{GIN}. It further justifies the effectiveness of the proposed cross-feature transformation module.
    \item Moreover, \textbf{Cross-GCN} with $E=16$ shows inspiring performance, which is consistently better than \textbf{GCN} and \textbf{GIN} with much larger hidden layers (\eg $E = 64$ and $E = 128$). This result further justifies the strong representation ability of \textbf{Cross-GCN} and indicates that the advanced representation ability is due to considering cross features rather than additional model parameters. Note that \textbf{Cross-GCN} doubles the model parameter of \textbf{GCN} when they have the same hidden layer size.
    \item \textbf{GIN} does not perform as well as expected when $E > 16$. We find that, during training, it always sticks at some sub-optimal points even though we try different learning rates and techniques like batch-normalization. It indicates that there are more optimization issues as we increase the depth of \textbf{GIN}. We leave the exploration of solutions in future work.
\end{itemize}

\subsubsection{Cross Feature at Different Layers}

\begin{table}
	\caption{Performance comparison of Cross-GCN on the Citeseer-Cross dataset with cross-feature considered at different layer.}
	\label{tab:syngraph_gate}
	\centering
	\resizebox{0.3\textwidth}{!}{
		\begin{tabular}{cc|c}
			\hline
			\multicolumn{2}{c|}{Feature Interaction} & \multirow{2}{*}{Accuracy}       \\ \cline{1-2} 
			$\nth{1}$ Layer & $\nth{2}$ Layer &        \\ \hline\hline
			0 & 0 & 62.9$\pm$1.7 \\
			0 & 1 & 63.3$\pm$1.0 \\
			1 & 0 & \textbf{78.9$\pm$2.0} \\ 
			1 & 1 & 76.2$\pm$1.0 \\ \hline
		\end{tabular}
	}
\end{table}

\begin{figure}[t]
	\centering
	\includegraphics[width=0.33\textwidth]{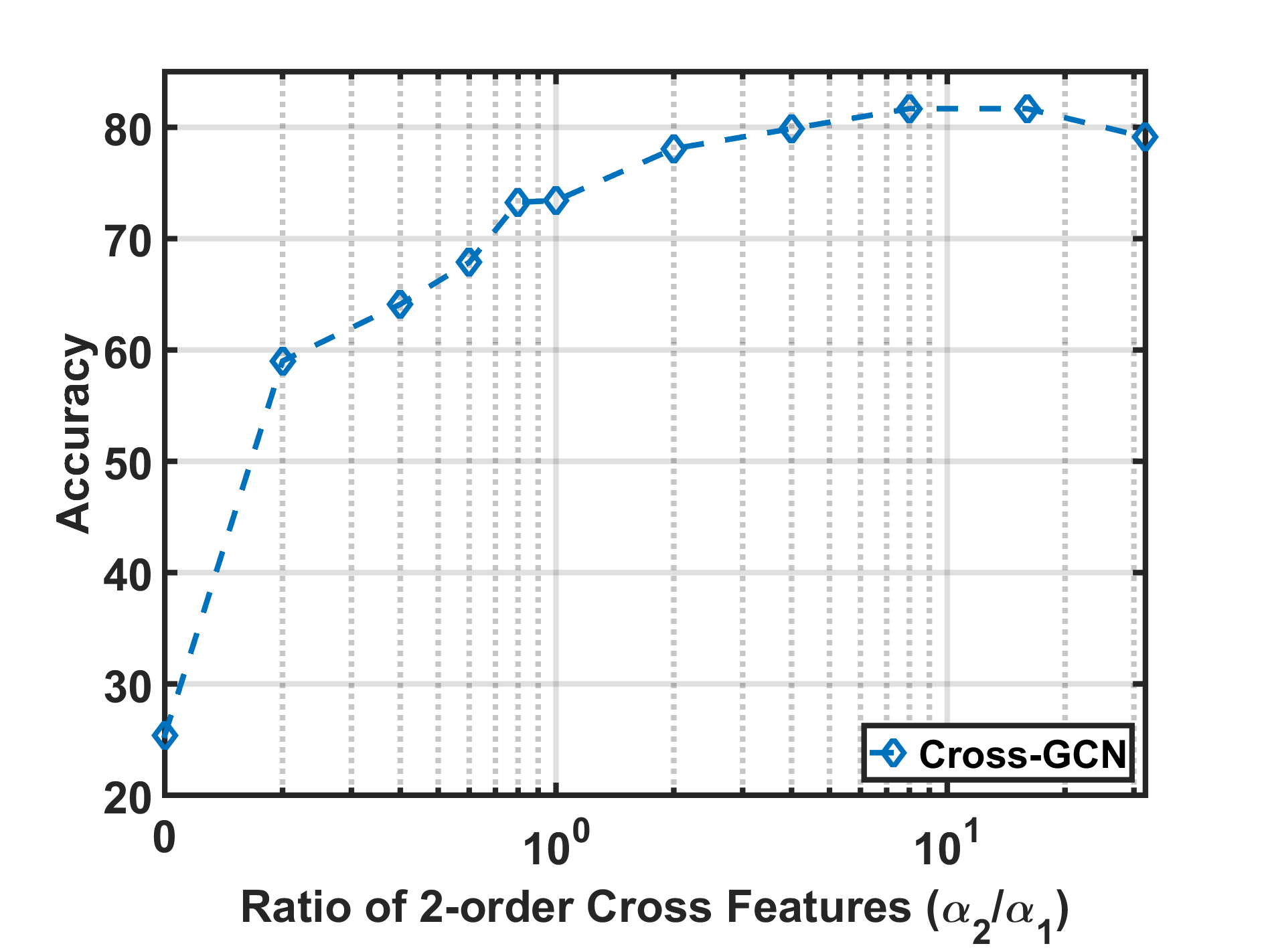}
	\vspace{-0.4cm}
	\caption{Impact on the performance of Cross-GCN with a single layer as adjusting the contribution of cross features.}
	\label{fig:syngraph_alpha}
\end{figure}


We then investigate the impact of considering cross features at different layers by performing ablation study. Table~\ref{tab:syngraph_gate} shows the performance of different combinations of \textbf{Cross-GCN} without feature crosses (row 1) as well as with feature crosses at the second layer only (row 2), first layer only (row 3), and both the first and second layers (row 4). From the table, we have the following observations:
\begin{itemize}[leftmargin=*]
    \item \textit{Cross-GCN} without consideration of cross features (row 1) achieves the worst performance. Once again, it demonstrates the necessity of performing cross feature modeling.
    \item There is a clear gap between the performance of considering cross features at the first layer (row 3) and the second layer (row 2). This result indicates that the utility of modeling feature crosses varies across layers and suggests modeling feature crosses at lower layers.
    \item \textit{Cross-GCN} achieves the best performance when considering cross features at the first layer only. The result is reasonable since the semi-real dataset is intentionally designed to benefit modeling of second-order interactions on raw features.
\end{itemize}

\subsubsection{Impact of Cross Features}
We investigate the impact of cross features by adjusting the values of $\alpha_1$ and $\alpha_2$ that control the contributions of 1-order feature transformation and 2-order cross-feature transformation to the output representation. Figure~\ref{fig:syngraph_alpha} shows the performance of \textit{Cross-GCN} (single layer) as we set $\alpha_1 = 1$ and adjust the value of $\alpha_2$ from 0 to 32. As can be seen, the prediction accuracy increases substantially (roughly from 20\% to 60\%) when $\alpha_2/\alpha_1$ is in the range of [0, 1], while is relatively stable when $\alpha_2/\alpha_1 > 1$. Findings on this results are twofold: 1) it demonstrates the necessity of regulating the contribution of feature interactions; and 2) we can simply set both $\alpha_1$ and $\alpha_2$ with fixed values of 1.0 to get an acceptable prediction performance. 

\subsection{Real-world Datasets}
\begin{table}[]
	\caption{Performance of the compared methods on the three real-world datasets \wrt testing accuracy.}
	\label{tab:real_perf}
	\centering
	\resizebox{0.45\textwidth}{!}{%
	\begin{tabular}{c|ccc}
		\hline
		Method & Citeseer & Cora & Pubmed \\ \hline \hline
		SemiEmb & 59.6 & 59.0 & 71.1 \\ 
		DeepWalk & 43.2 &67.2 &  65.3 \\ \hline
		GCN1 & 68.9$\pm$2.2 & 70.3$\pm$3.4 & 72.1$\pm$2.8 \\ 
		GIN1 & 62.4$\pm$2.9 & 74.3$\pm$2.4 & 74.5$\pm$2.3 \\ 
		Cross-GCN1\_Fix & 70.1$\pm$1.8 & 72.7$\pm$2.4 & 74.2$\pm$2.6 \\
		Cross-GCN1 & 68.9$\pm$1.9 & 72.9$\pm$2.2 & 74.8$\pm$2.7 \\ \hline 
		GCN & 69.7$\pm$2.0 & 79.1$\pm$1.8 & 77.6$\pm$2.0 \\ 
		GIN & 68.9$\pm$2.0 & 78.5$\pm$1.9 & 78.7$\pm$1.6 \\ 
		Cross-GCN\_Fix & \textbf{71.3$\pm$1.7} & 78.6$\pm$1.8 & \textbf{79.3$\pm$1.8} \\
		Cross-GCN & 69.6$\pm$2.2 & \textbf{79.8$\pm$1.6} & 78.8$\pm$1.8 \\ \hline 
	\end{tabular}%
	}
\end{table}
We further test the GCN models on three real-world datasets where the prediction performance is not only determined by the consideration of feature crosses.
\subsubsection{Performance Comparison}
Here, we compare three more methods: \textbf{SemiEmb}, \textbf{DeepWalk}, and \textbf{Cross-GCN\_Fix}. \textbf{Cross-GCN\_Fix} is a variant of \textbf{Cross-GCN} with fixed value (1.0) of $\alpha_1$ and $\alpha_2$. Table~\ref{tab:real_perf} shows the performance of the compared methods. 
From the results, we have the following observations:
\begin{itemize}[leftmargin=*]
    \item With single layer, in most cases, all models with consideration of feature crosses, \ie \textbf{GIN1}, \textbf{Cross-GCN1\_Fix}, and \textbf{Cross-GCN1}, outperform \textbf{GCN1}, which signifies the effectiveness and rationality of considering feature crosses in a graph convolution operator. Note that single layer means that the model only considers the 1-hop neighbors when calculating the representation of a target node, rather than the total number of network layers of the model. As such, the \textbf{GIN1} model that employs an MLP to perform feature transformation can better capture non-linear feature transformation. On dataset where such non-linear feature important is important, \textbf{GIN1} achieves better performance on both \textbf{Cross-GCN1} and \textbf{Cross-GCN1\_Fix}.
    
    For the two layer versions, either \textbf{Cross-GCN} or \textbf{Cross-GCN\_Fix} achieves the best performance on different datasets, which further indicates the advantage of considering feature crosses. Moreover, the improvement over \textbf{GIN} validates the effectiveness of the proposed method to perform high-order feature crosses. Furthermore, \textbf{Cross-GCN\_Fix} achieves comparable performance as \textbf{Cross-GCN}, which suggests employing a simple order aggregation to avoid additional model parameters and the risk of over-fitting. Besides, across different datasets, the performance improvement over \textbf{GCN} on Cora is the smallest. Again, we postulate the reason to be that non-linear feature transformation is more important than feature crosses on Cora.
    \item In most cases, graph convolution-based models outperform conventional graph representation learning methods. In addition, these models with two layers achieves better performance than the corresponding single layer models. Similar observations have been presented in previous research~\cite{kipf2017semi,hamilton2017inductive}. Moreover, as compared to the origin GCN paper~\cite{kipf2017semi}, all the results have large variances. This should not be a concern since recent works~\cite{wu2019simplifying} have shown consistent results, and can be easily avoided by removing the runs with outlier performance~\cite{wu2019simplifying}. In addition, our reproduction of GCN (two layers) achieves inconsistent performance as compared to the origin GCN paper. Note that, while our GCN reproduction performs worse on Cora and Pubmed, it performs better on Citeseer. Moreover, previous work~\cite{wu2019simplifying} has also reported inconsistent reproducing results, which thus should also not be a concern.
\end{itemize}

\subsubsection{In-depth Analysis on GIN}
\label{sss:train_ratio}
\begin{figure}[t]
	\centering
	\includegraphics[width=0.33\textwidth]{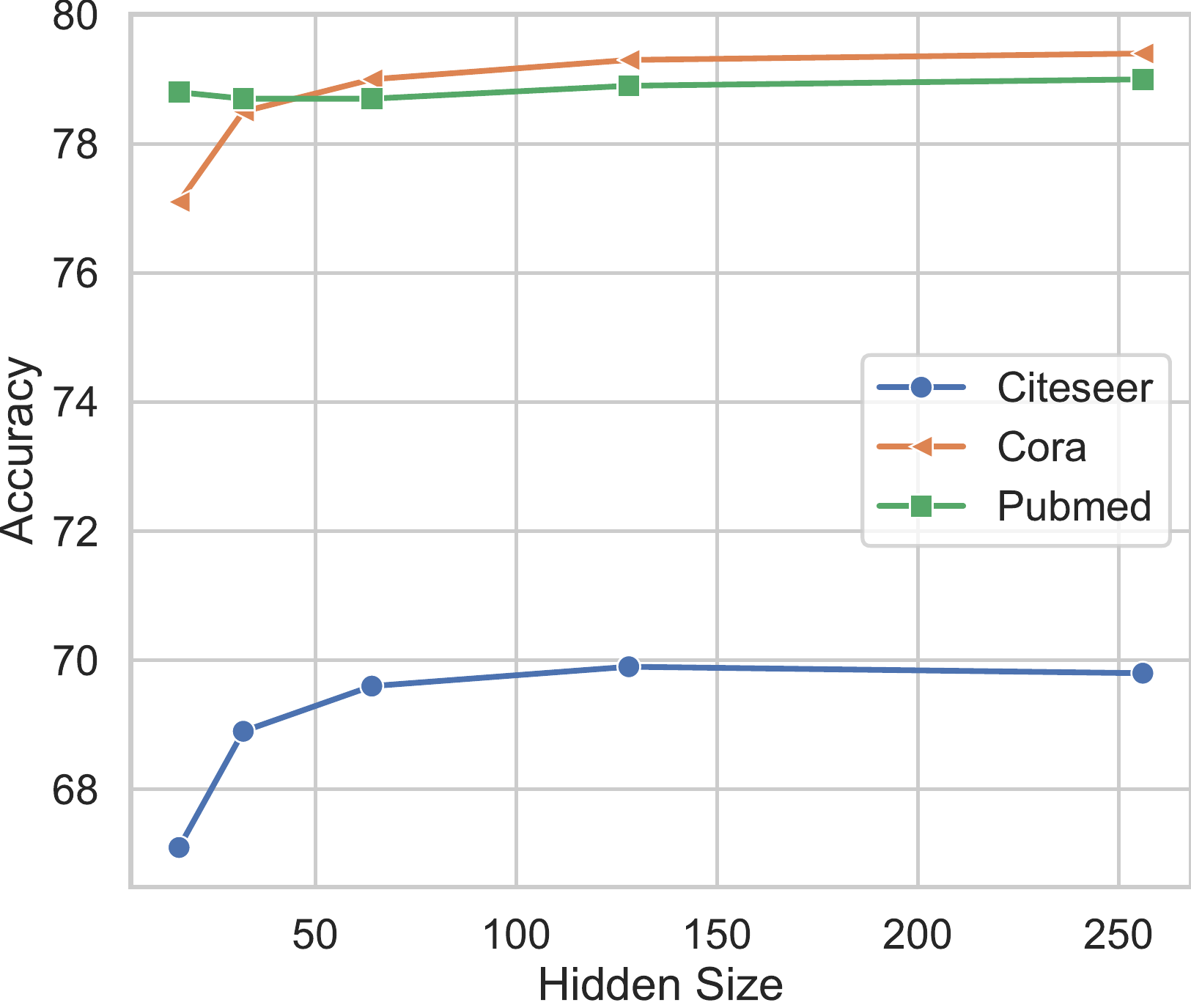}
	\vspace{-0.4cm}
	\caption{Impact on the performance of GIN with different hidden layer size.}
	\label{fig:gin_hidden}
\end{figure}

As aforementioned, according to the universal approximation theorem~\cite{hornik1991approximation}, an MLP can approximate any arbitrary transformation including the cross-feature transformation even though it may take a large number of hidden units \cite{andoni2014learning,beutel2018latent}. We study the impact of hidden layer size on the effectiveness of \textbf{GIN}. As shown in Figure~\ref{fig:gin_hidden}, \textbf{GIN} achieves better performance as we increase the hidden size from 32 to 256. This result indicates that feature crosses (\ie multiplication operation) indeed require more hidden units, which is consistent with previous work~\cite{andoni2014learning,beutel2018latent}. Moreover, the performance of \textbf{GIN} with hidden size of 256 is still not comparable as \textbf{Cross-GCN\_Fix} which further validates the rationality of considering feature crosses in an explicit manner.

\subsubsection{Impact of Hyper-parameters}
\begin{figure*}[]
	\centering
	\mbox{
		\hspace{-0.1in}
		\subfigure[Cora]{
			\label{fig:hp_dropout}
			\includegraphics[width=0.31\textwidth,trim={0 0 0 0cm}, clip]{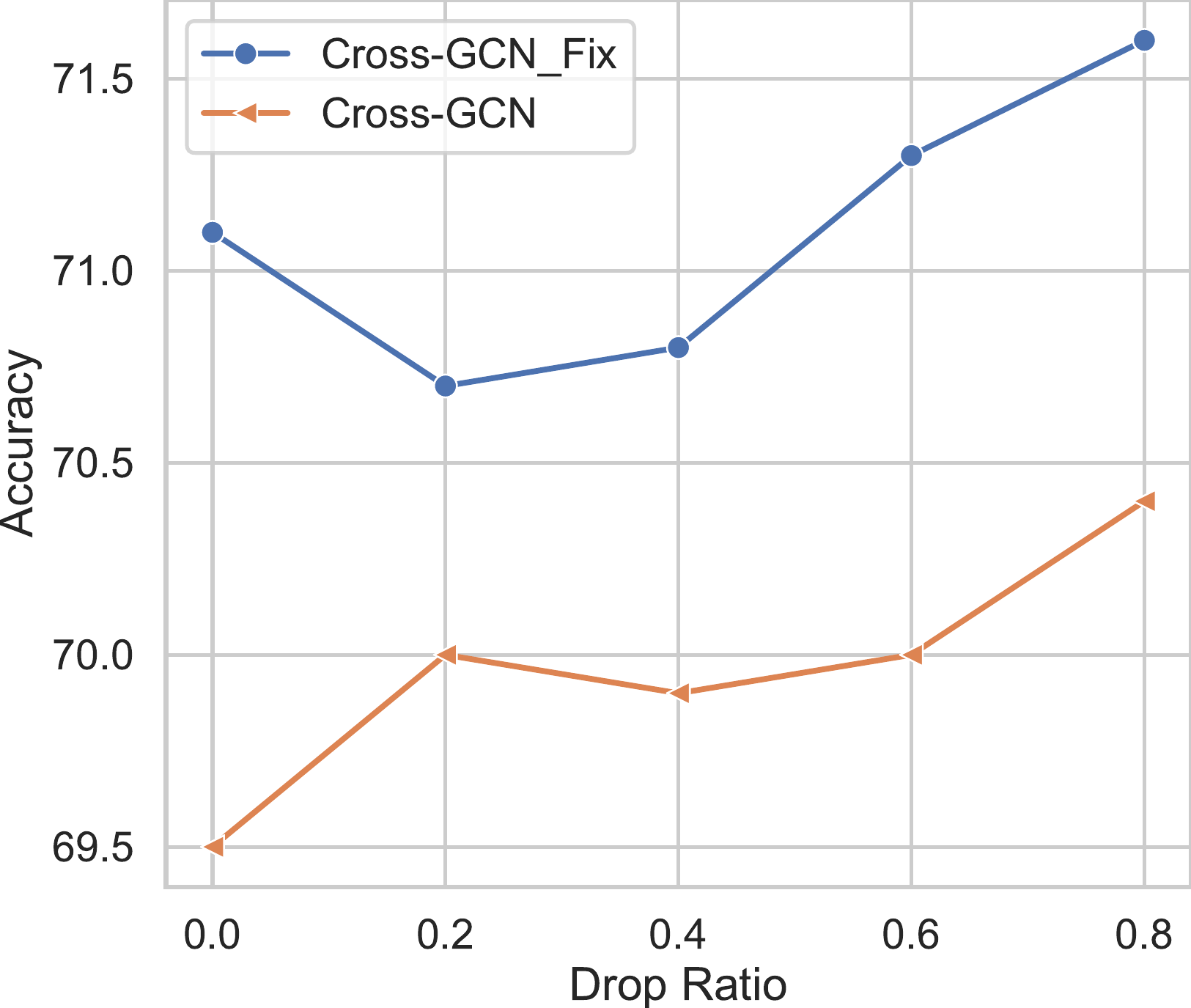}
		}
		\hspace{-0.2in}
		\subfigure[Citeseer]{
			\label{fig:hp_weight}
			\includegraphics[width=0.31\textwidth,trim={0 0 0 0cm}, clip]{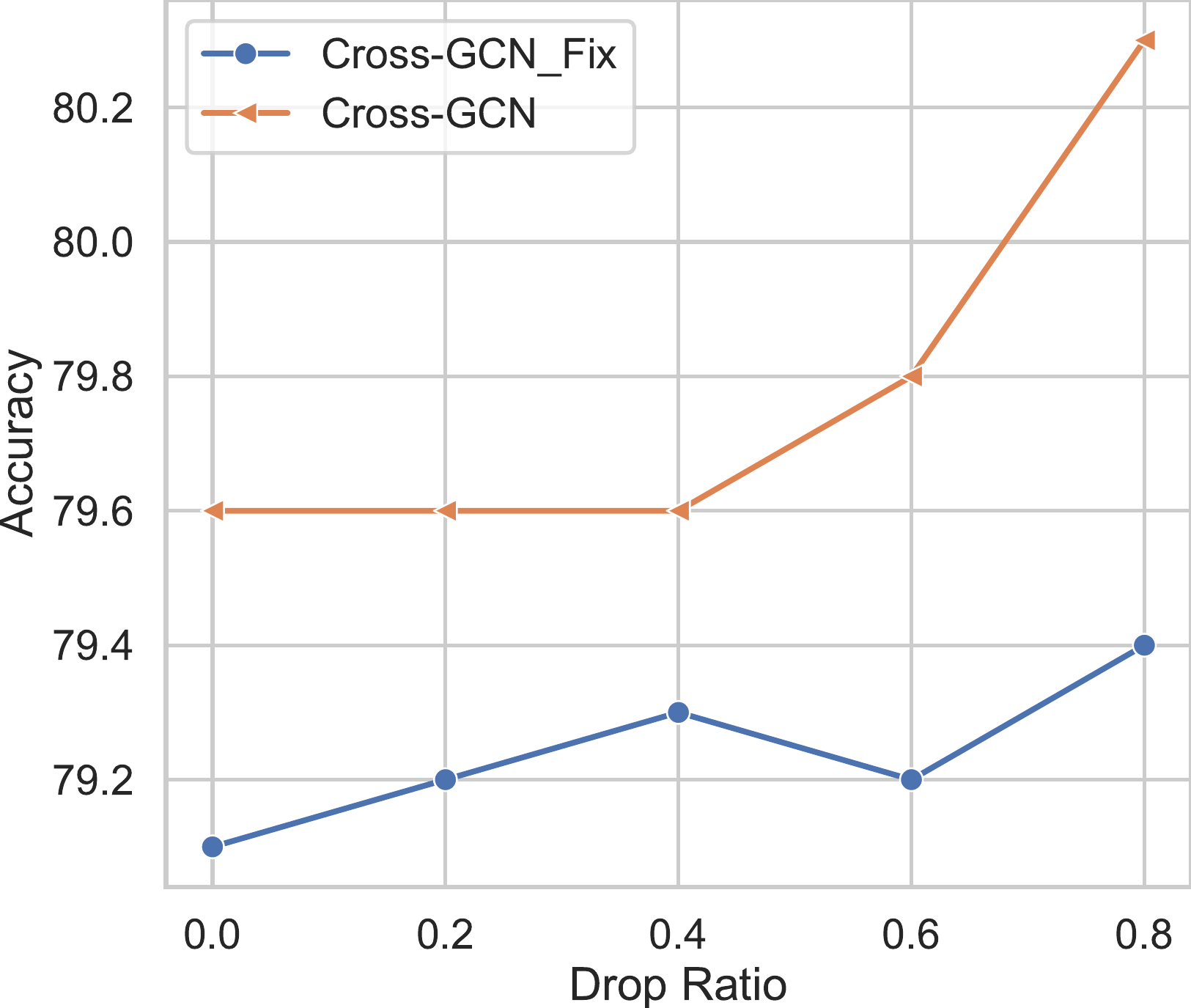}
		}
		\hspace{-0.2in}
		\subfigure[Pubmed]{
			\label{fig:hp_hidden}
			\includegraphics[width=0.31\textwidth,trim={0 0 0 0cm}, clip]{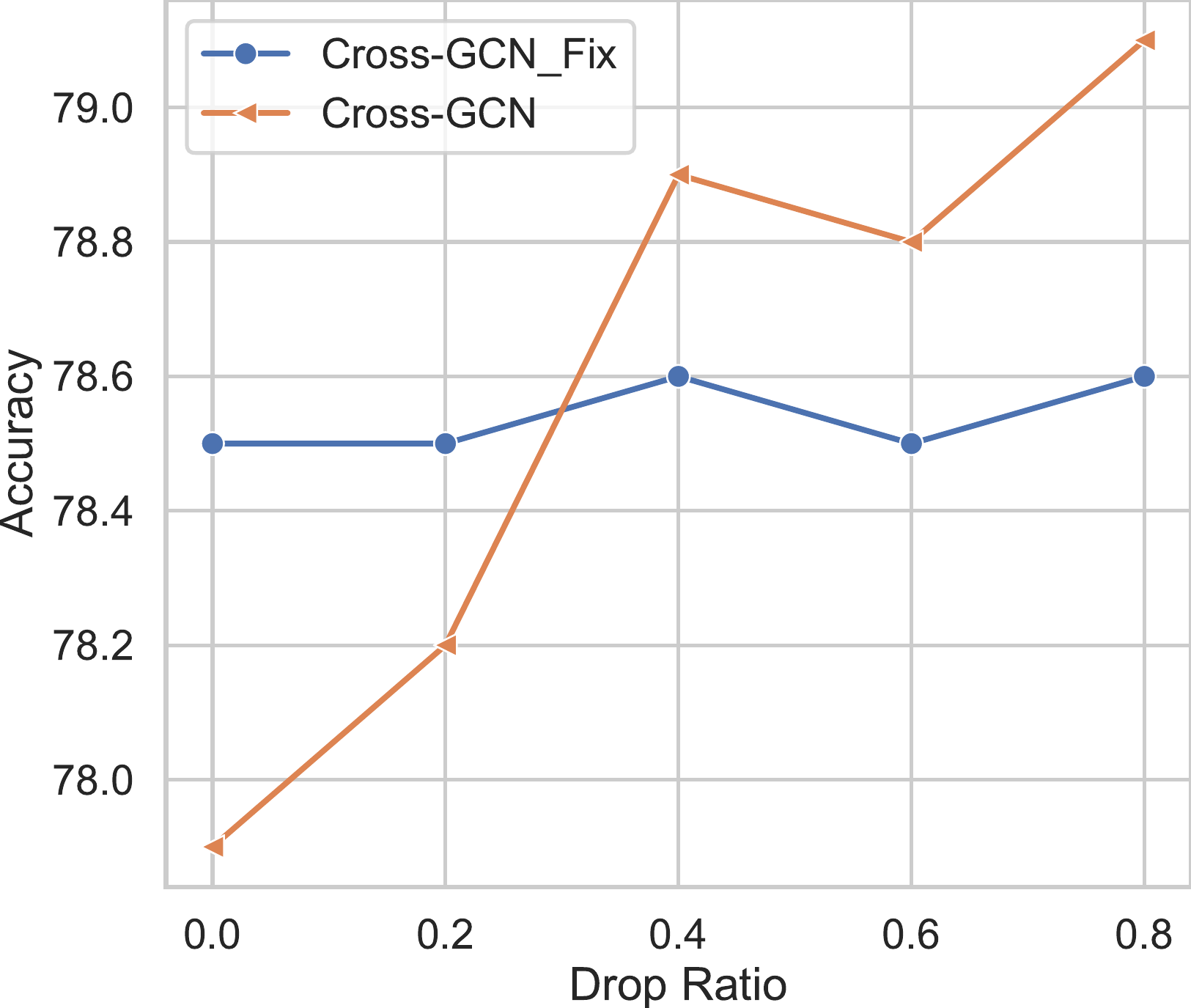}
		}
	}
	\caption{Performance of Cross-GCN on citation graphs as varying the value of its hyper-parameters.
	}
	\label{fig:hyperpara}
\end{figure*}
We then study the impact of hyper-parameters on the effectiveness of the proposed cross-feature graph convolution. We select the drop ratio ($\rho$) of dropout during model training as an example\footnote{This is because dropout is effective to prevent over-fitting which is a critical issue in the semi-supervised learning task with rare labeled nodes.}.  Figure~\ref{fig:hyperpara} illustrates the performance of \textbf{Cross-GCN} and \textbf{Cross-GCN\_Fix} as increasing the drop ratio from 0 to 0.8 on Citeseer (a), Cora (b), and Pubmed (c). From the results, we have the following observations, 1) in all the cases, both \textbf{Cross-GCN} and \textbf{Cross-GCN\_Fix} achieve better performance when increasing the drop rate, which reflects the importance of preventing over-fitting during the training of such GCN models. 2) As compared to \textbf{Cross-GCN}, on each dataset, the performance of \textbf{Cross-GCN\_Fix} varies in a smaller range. This result is reasonable since \textbf{Cross-GCN\_Fix} performs order aggregation without introducing any additional parameters, making the model less sensitive to the risk of over-fitting.

\subsubsection{Comparison of Efficiency}
Finally, we study the computational complexity by comparing the training time of \textbf{GCN}, \textbf{GIN}, and \textbf{Cross-GCN} on Reddit400K. Table~\ref{tab:train_time} shows the average training time over 50 continues epochs. Note that we omit the standard deviation since its value is tiny. Compared to \textbf{GIN} that implicitly encodes feature interactions, \textbf{Cross-GCN} achieves comparable running time. This result indicates that explicitly modeling cross feature is as efficient as the implicit manner. In addition, the average time of training \textbf{Cross-GCN} is roughly at most 1.5 times that of \textbf{GCN} which means the overhead is affordable. In addition, it is consistent with the analysis in Section~\ref{ss:complexity} and further validates the strong usability of \textbf{Cross-GCN}.
\begin{table}[]
	\caption{Average training time (seconds) over epochs on Reddit400K.}
	\label{tab:train_time}
	\resizebox{0.48\textwidth}{!}{%
		\begin{tabular}{c|cccccc}
			\hline
			\multirow{2}{*}{Method} & \multicolumn{6}{c}{Size of Hidden Layer} \\ \cline{2-7} 
			& 32 & 64 & 128 & 256 & 512 & 1024 \\ \hline
			GCN & 0.433 & 0.466 & 0.557 & 0.751 & 1.120 & 1.768  \\ 
			GIN & 0.478 & 0.534 & 0.627 & 0.881 & 1.430 & 2.629 \\ 
			Cross-GCN & 0.485 & 0.525 & 0.650 & 0.906 & 1.407 & 2.307 \\ \hline
		\end{tabular}%
	}
\end{table}
\section{Related Work}
\label{s:rel}
\subsection{Graph Neural Networks}
The recent year has witnessed the success of representation learning over graphs with neural networks owing to their extraordinary ability of non-linear modeling. The recent methods can be roughly divided into three main categories regarding their technique to encode graph structure: 1) \textit{skip-gram}, 2) \textit{autoencoder}, and 3) \textit{graph convolution}.

\textbf{Skip-gram}. Inspired by the skip-gram model~\cite{mikolov2013distributed}, which is proposed to learn word embeddings from large-scale documents, many recent methods learn the node embeddings based on a large scale of node sequences generated by random walk~\cite{perozzi2014deepwalk,grover2016node2vec,bojchevski2018deep,ying2018graph,bose2018adversarial}. Most of the existing methodologies in this line implement the idea by optimizing a classifier of which the target is to predict the co-occurred node (positive/target node) of a given node (context node), such as DeepWalk~\cite{perozzi2014deepwalk} and node2vec~\cite{grover2016node2vec}. Finding that negative sampling plays crucial role in training skip-gram model, a line of work has been focusing on exploring new sampling techniques. In particular, recent work~\cite{bojchevski2018deep} uses node-anchored sampling as an alternative of the uniform sampler in vertex2vec, which incorporates the connectedness to the context node during the sampling. More recently, dynamic negative sampling scheme is adopted, which adaptively selects hard negative samples (\ie similar to the context node) to boost the training process~\cite{ying2018graph,bose2018adversarial}. The main restriction of skip-gram-based methods is learning embedding independently from the predictive analysis, \ie in a two-phase fashion. As such, different applications on the same graph have to use the same node embedding while different applications might highlight different connection properties such as local similarity and structural similarity.

\textbf{Autoencoder}. Inspired by the success of Autoencoder (AE) in learning embedding from original features, various works use AE to learn node embedding. Sparse Autoencoder (SAE)~\cite{tian2014learning} is the first work in this line of research, which takes each column of the adjacency matrix as node features. Structure Deep Network Embedding SDNE~\cite{wang2016structural} enhances SAE by further incorporating Laplacian regularization. Furthermore, DNGR~\cite{cao2016deep} employs the positive pointwise mutual information between nodes as the features. As a natural extension of AE, recently, Variational Autoencoder (VAE) has also been introduced to learn node embeddings~\cite{kipf2016variational,zhu2018deep}, so that to learn more robust embeddings owning to its inherent generation model. Although AE-based methods could be end-to-end trained, their performance largely relies on designing graph-oriented node features which is typically labor intensive. 

\textbf{Graph Convolution}. Recently, research attention on graph representation learning has been shifted to convolution-based methods inspired by its extraordinary success in computer vision. Most of existing research focuses on developing node aggregation modules satisfying different requirements. For instance, GCN~\cite{kipf2017semi} and GraphSAGE~\cite{hamilton2017inductive} use pooling functions to aggregate directly connected neighbors. To enlarge the reception field, ChebNet~\cite{defferrard2016convolutional} and MixHop~\cite{abu2019mixhop} aggregate $k$-hop neighbors. Some other works focus on calculating weights for different nodes during the aggregating. For instance, DCNN~\cite{atwood2016diffusion} uses the transition probability of a length $k$ diffusion process. GAT~\cite{velickovic2018graph} and CAO~\cite{gao2019graph} adopt attention modules. Another line of research is to extend GCN from simple graphs to more complex graphs such as hypergraph~\cite{feng2019Hypergraph} and heterogeneous graph~\cite{wang2019heterogeneous,zhang2019heterogeneous}. 
Despite effectiveness, the existing designs of GCN forgo modeling the cross of features, which is the key difference to the proposed method.

\subsection{Cross-feature Modeling}
In the literature, two existing neural networks for non-graph data: \textit{Cross Network}~\cite{wang2017deep} and \textit{Compressed Interaction Netwrok} (CIN)~\cite{lian2018xdeepfm} also model cross features. Both of them calculate cross features with a stacking of multiple layers, and the $k$-order cross features are calculated at the $k$-th layer by crossing the output of layer $k-1$ and the input features. In other words, they have to take $K$ layers to incorporate cross features at up to $K$ orders. Therefore, once being equipped into a GCN layer to perform feature transformation, they would significantly increase the depth of a multi-layer GCN, leading to additional optimization issues. In addition, CIN computes the interactions between two continues layers with a complexity up to $O(D^2)$. CIN shows limited usability since the feature dimension ($D$) could be in the order of millions in some applications such as Web scale recommendation. Shallow models like Higher-order Factorization Machines~\cite{yang2015tensor,blondel2016higher} also considered $k$-order cross features. However, they have higher computation cost than CIN~\cite{lian2018xdeepfm}. In summary, the proposed cross-feature transformation module shows significant advantages over the potential solutions owing to its linear complexity to feature dimension.
\section{Conclusion}
\label{s:con}
In this work, we found that existing designs of GCN lack the ability to consider cross features, which are crucial for the success of several graph applications. To bridge the gap, we proposed a new graph convolution operator, named \textit{Cross-feature Graph Convolution}, which models feature crosses with complexity linear to feature dimension and order size. Furthermore, we developed a new solution, named \textit{Cross-GCN}, for graph-based learning tasks such as node classification. Experiments on three graphs demonstrated the effectiveness of Cross-GCN. In addition, the results indicates that applying Cross-GCN in applications with sparse low-level node features is promising.

In the future, we would test Cross-GCN with higher-order feature interactions and deeper structures. In addition, we plan to test Cross-GCN in other graph-based learning tasks such as link prediction. 
Moreover, we will further consider cross features in the node aggregation operation of GCNs. Furthermore,
we are interested in exploring the effectiveness of the cross-feature transformation module with different node aggregation methods such as Attention Networks~\cite{velickovic2018graph}. 
Lastly, we will explore methods to learn the weights ($\alpha$) in the order aggregation function and implementations of the function advanced than the employed weighted sum.
\bibliographystyle{IEEEtran}
\bibliography{00_refers}
\vspace{-34pt}
\begin{IEEEbiography}
[{\includegraphics[width=1.0in,height=1.4in,clip,keepaspectratio]{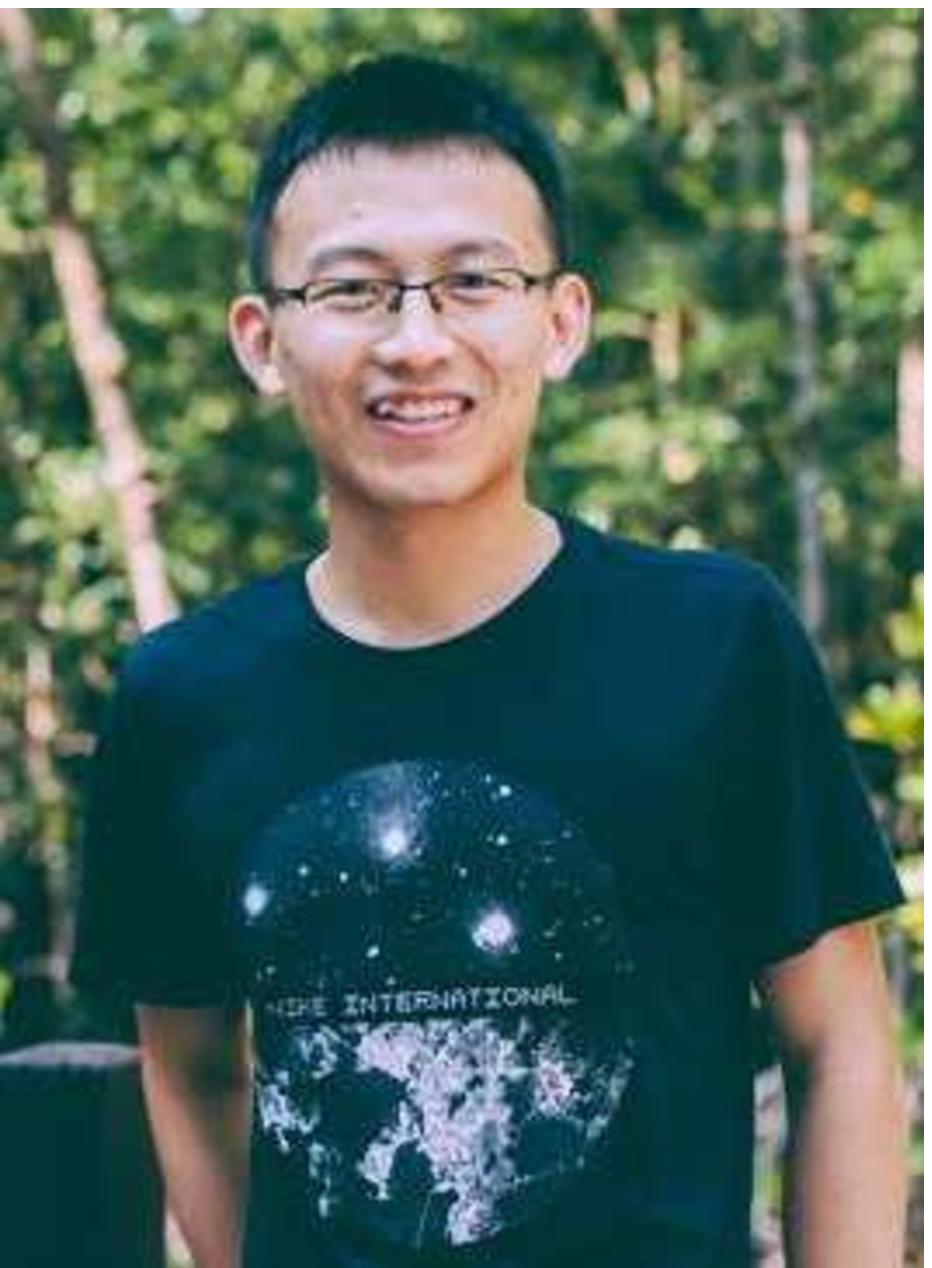}}]{Fuli Feng} is a Ph.D. student in the School of Computing, National University of Singapore. He received the B.E. degree in School of Computer Science and Engineering from Baihang University, Beijing, in 2015. His research interests include information retrieval, data mining, and multi-media processing. He has over 10 publications appeared in several top conferences such as SIGIR, WWW, and MM. His work on Bayesian Personalized Ranking has received the Best Poster Award of WWW 2018. Moreover, he has been served as the PC member and external reviewer for several top conferences including SIGIR, ACL, KDD, IJCAI, AAAI, WSDM etc.
\end{IEEEbiography}

\vspace{-35pt}
\begin{IEEEbiography}
[{\includegraphics[width=1.0in,height=1.4in,clip,keepaspectratio]{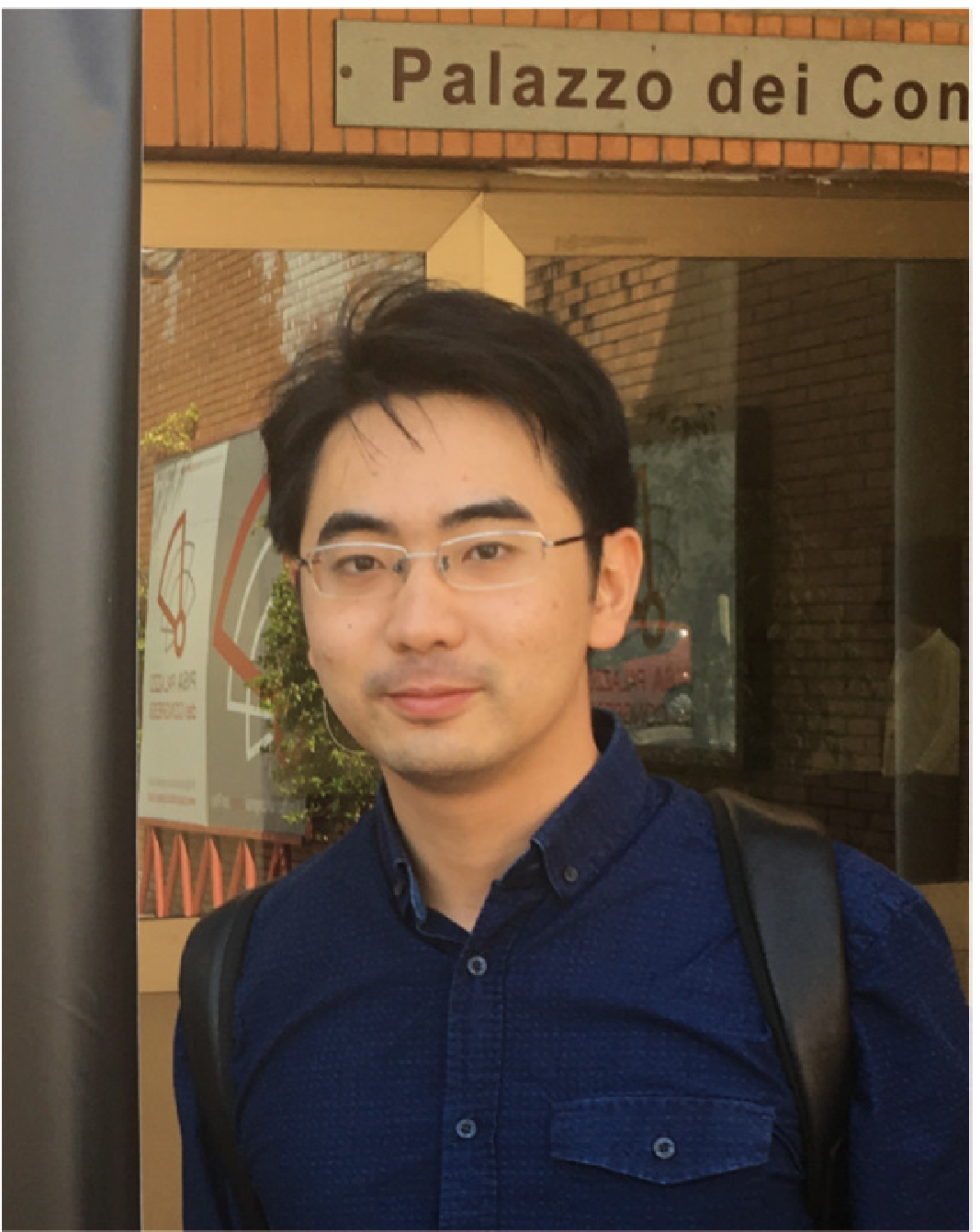}}]{Xiangnan He} is currently a research fellow with School of Computing, National University of Singapore (NUS). He received his Ph.D. in Computer Science from NUS. His research interests span recommender system, information retrieval, natural language processing and multi-media. His work on recommender system has received the Best Paper Award Honorable Mention in WWW 2018 and SIGIR 2016. Moreover, he has served as the PC member for top-tier conferences including SIGIR, WWW, MM, KDD, WSDM, CIKM, AAAI, and ACL, and the invited reviewer for prestigious journals including TKDE, TOIS, TKDD, TMM, and WWWJ. \end{IEEEbiography}

\vspace{-30pt}
\begin{IEEEbiography}
[{\includegraphics[width=1in,height=1.25in,clip,keepaspectratio]{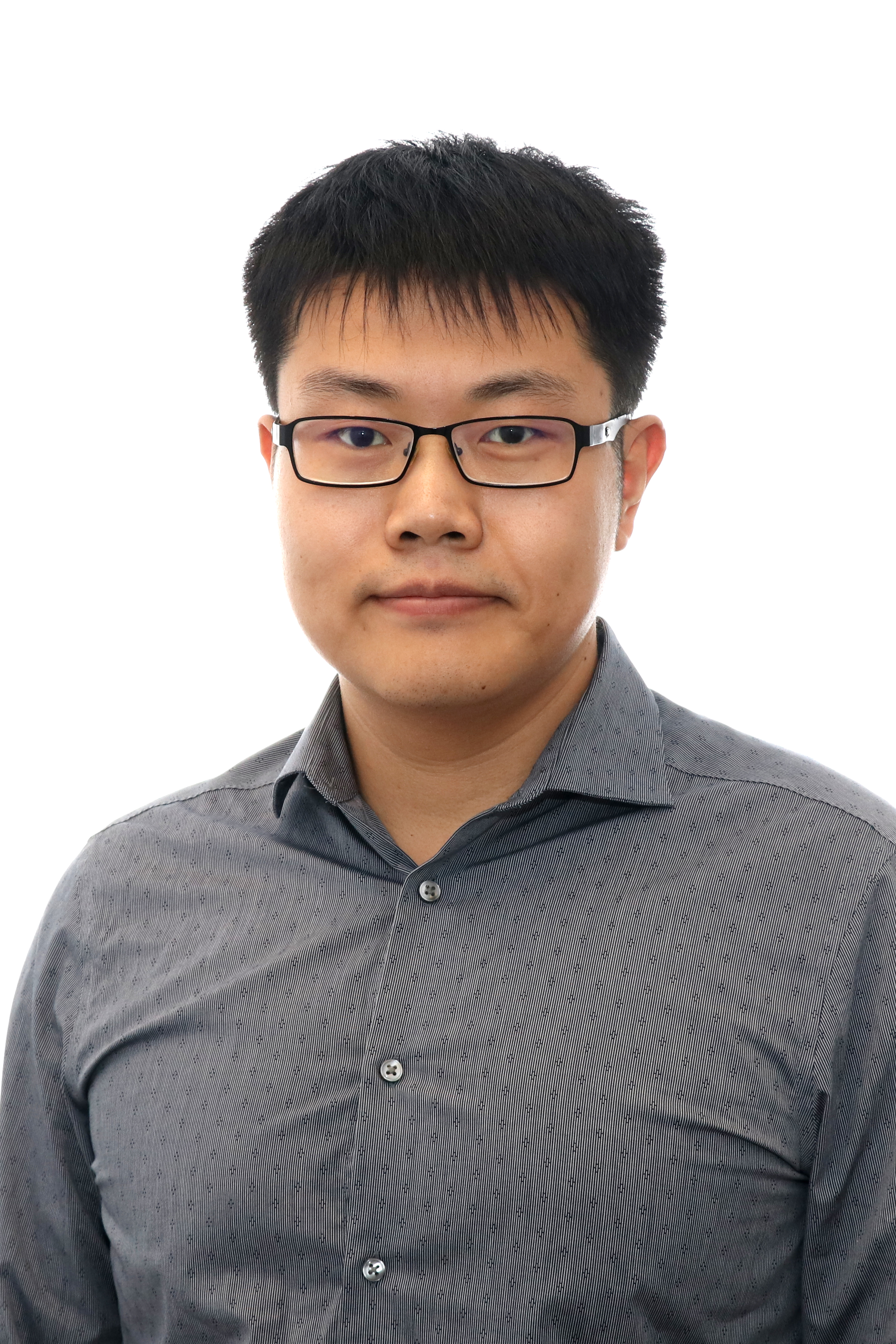}}]
{Hanwang Zhang} is currently an Assistant Professor at Nanyang Technological University, Singapore. He was a research scientist at the Department of Computer Science, Columbia University, USA. He has received the B.Eng (Hons.) degree in computer science from Zhejiang University, Hangzhou, China, in 2009, and the Ph.D. degree in computer science from the National University of Singapore in 2014. His research interest includes computer vision, multimedia, and social media. Dr. Zhang is the recipient of the Best Demo runner-up award in ACM MM 2012, the Best Student Paper award in ACM MM 2013, and the Best Paper Honorable Mention in ACM SIGIR 2016，and TOMM best paper award 2018. He is also the winner of Best Ph.D. Thesis Award of School of Computing, National University of Singapore, 2014.
\end{IEEEbiography}

\vspace{-30pt}
\begin{IEEEbiography}[{\includegraphics[width=1.0in,height=1.4in,clip,keepaspectratio]{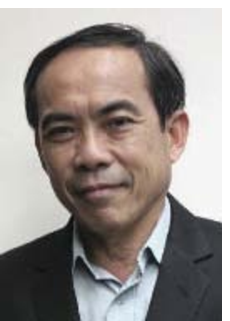}}]{Tat-Seng Chua}
Tat-Seng Chua is the KITHCT Chair Professor at the School of Computing, National University of Singapore. He was the Acting and Founding Dean of the School from 1998-2000. Dr Chuas
main research interest is in multimedia information
retrieval and social media analytics. In
particular, his research focuses on the extraction,
retrieval and question-answering (QA) of
text and rich media arising from the Web and
multiple social networks. He is the co-Director of
NExT, a joint Center between NUS and Tsinghua
University to develop technologies for live social media search. Dr Chua
is the 2015 winner of the prestigious ACM SIGMM award for Outstanding
Technical Contributions to Multimedia Computing, Communications and
Applications. He is the Chair of steering committee of ACM International
Conference on Multimedia Retrieval (ICMR) and Multimedia Modeling
(MMM) conference series. Dr Chua is also the General Co-Chair of
ACM Multimedia 2005, ACM CIVR (now ACM ICMR) 2005, ACM SIGIR
2008, and ACMWeb Science 2015. He serves in the editorial boards of
four international journals. Dr. Chua is the co-Founder of two technology
startup companies in Singapore. He holds a PhD from the University of
Leeds, UK.
\end{IEEEbiography}
\end{document}